\newtheorem{theorem}{Theorem}
\newtheorem{lemma}[theorem]{Lemma}
\newtheorem{fact}[theorem]{Fact}
\newtheorem{definition}[theorem]{Definition}
\newtheorem{observation}[theorem]{Observation}
\newcommand{\cA}{\mathcal{A}}
\newcommand{\cD}{\mathcal{D}}
\newcommand{\cF}{\mathcal{F}}
\newcommand{\cH}{\mathcal{H}}
\newcommand{\cM}{\mathcal{M}}
\newcommand{\cS}{\mathcal{S}}
\newcommand{\cT}{\mathcal{T}}
\newcommand{\cX}{\mathcal{X}}
\newcommand{\cY}{\mathcal{Y}}
\newcommand{\cZ}{\mathcal{Z}}
\newcommand{\ind}{\mathbf{1}}
\newcommand{\eps}{\varepsilon}
\newcommand{\N}{\mathbb{N}}
\newcommand{\R}{\mathbb{R}}
\newcommand{\E}{\mathbb{E}}
\newcommand{\bz}{\mathbf{z}}
\newcommand{\cL}{\mathcal{L}}
\newcommand{\cLa}{\cL^{\mathrm{abs}}}
\renewcommand{\phi}{\varphi}
\newcommand{\clipped}{\mathrm{clipped}}
\newcommand{\hy}{\hat{y}}
\newcommand{\tO}{\tilde{O}}
\newcommand{\tA}{\tilde{A}}
\DeclareMathOperator{\argmin}{argmin}
\DeclareMathOperator{\wid}{width}
\DeclareMathOperator{\hei}{height}
\DeclareMathOperator{\score}{score}
\DeclareMathOperator{\range}{range}
\DeclareMathOperator{\clip}{clip}
\DeclareMathOperator{\opt}{opt}
\newcommand{\set}[1]{\left \{ #1 \right \}}
\newcommand{\paren}[1]{\left( #1 \right)}
\newcommand{\ceil}[1]{\left\lceil #1 \right \rceil}
\newcommand{\floor}[1]{\left\lfloor #1 \right\rfloor}
\definecolor{Gred}{RGB}{219, 50, 54}
\definecolor{Ggreen}{RGB}{60, 186, 84}
\definecolor{Gblue}{RGB}{72, 133, 237}
\definecolor{Gyellow}{RGB}{247, 178, 16}
\definecolor{ToCgreen}{RGB}{0, 128, 0}
\definecolor{myGold}{RGB}{231,141,20}
\definecolor{myBlue}{rgb}{0.19,0.41,.65}
\definecolor{myPurple}{RGB}{175,0,124}
\providecommand{\Comments}{0}
\newcommand{\mytodo}[1]{\ifnum\Comments=1{#1}\fi}
\newcommand{\tableoftodos}{\ifnum\Comments=1 \listoftodos[Comments/To Do's] \fi}
\title{Private Isotonic Regression}
\begin{document}

\author{
Badih Ghazi
\hspace*{1cm}
\bf Pritish Kamath 
\hspace*{1cm}
\bf Ravi Kumar
\hspace*{1cm}
\bf Pasin Manurangsi
\\
Google Research\\
Mountain View, CA, US \\
\texttt{badihghazi@gmail.com,  pritish@alum.mit.edu,} \\
\texttt{ravi.k53@gmail.com, pasin@google.com}
}
\maketitle

\begin{abstract}
In this paper, we consider the problem of differentially private (DP) algorithms for isotonic regression.  For the most general problem of isotonic regression over a partially ordered set (poset)  $\mathcal{X}$ and for any Lipschitz loss function, we obtain a pure-DP algorithm that, given $n$ input points, has an expected excess empirical risk of roughly $\mathrm{width}(\mathcal{X}) \cdot \log|\mathcal{X}| / n$, where $\mathrm{width}(\mathcal{X})$ is the width of the poset.  In contrast, we also obtain a near-matching lower bound of roughly $(\mathrm{width}(\mathcal{X}) + \log |\mathcal{X}|) / n$, that holds even for approximate-DP algorithms. Moreover, we show that the above bounds are essentially the best that can be obtained without utilizing any further structure of the poset.
In the special case of a totally ordered set and for $\ell_1$ and $\ell_2^2$ losses, our algorithm can be implemented in near-linear running time; we also provide extensions of this algorithm to the problem of private isotonic regression with additional structural constraints on the output function.
\end{abstract}

\section{Introduction}

Isotonic regression is a basic primitive in statistics and machine learning, which has been studied at least since the $1950$s \cite{ayer1955empirical, brunk1955maximum}; see also the textbooks on the topic \cite{barlow1973, robertson1988}. It has seen applications in numerous fields including medicine \cite{luss2012efficient, schell1997reduced} where the expression of an antigen is modeled as a monotone function of the DNA index and WBC count, and education \cite{dykstra1982algorithm}, where isotonic regression was used to predict college GPA using high school GPA and standardized test scores. Isotonic regression is also arguably the most common non-parametric method for calibrating machine learning models \cite{zadrozny2002transforming}, including modern neural networks \cite{guo2017calibration}.

In this paper, we study isotonic regression with a differential privacy (DP) constraint on the output model. DP \cite{DworkMNS06, dwork2006our} is a highly popular notion of privacy for algorithms and machine learning primitives, and has seen increased adoption due to its powerful guarantees and properties \cite{tf-privacy, pytorch-privacy}. Despite the plethora of work on DP statistics and machine learning (see Section~\ref{sec:related_work} for related work), ours is, to the best of our knowledge, the first to study DP isotonic regression.

In fact, we consider the most general version of the isotonic regression problem.  We first set up some notation to describe our results. Let $(\cX, \leq)$ be any partially ordered set (\emph{poset}).
A function $f: \cX \to [0,1]$ is \emph{monotone} if and only if $f(x) \leq f(x')$ for all $x \leq x'$. For brevity, we use $\cF(\cX, \cY)$ to denote the set of all monotone functions from $\cX$ to $\cY$; throughout, we consider $\cY \subseteq [0, 1]$.

Let $[n]$ denote $\{1, \ldots, n\}$.
Given an input dataset $D = \{ (x_1, y_1), \dots, (x_n, y_n) \} \in (\cX \times [0, 1])^n$, let the \emph{empirical risk} of a function $f: \cX \to [0, 1]$ be $\cL(f; D) := \frac{1}{n} \sum_{i \in [n]} \ell(f(x_i), y_i)$, where $\ell: [0, 1] \times [0, 1] \to \R$ is a \emph{loss function}.

We study private isotonic regression in the basic machine learning framework of empirical risk minimization.
Specifically, the goal of the {\em isotonic regression} problem, given dataset $D = \{ (x_1, y_1), \dots, (x_n, y_n) \} \in (\cX \times [0, 1])^n$, is to find a monotone function $f: \cX \to [0, 1]$ that minimizes $\cL(f; D)$. The \emph{excess empirical risk} of a function $f$ is defined as $\cL(f; D) - \cL(f^*; D)$ where $f^* := \argmin_{g \in \cF(\cX, \cY)} \cL(g; D)$.

\subsection{Our Results}

\paragraph{General Posets.}
Our first contribution is to give nearly tight upper and lower bounds for any poset, based on its \emph{width}, as stated below (see \Cref{sec:general-posets} for a formal  definition.)

\begin{theorem}[Upper Bound for General Poset] \label{thm:alg-generic}
Let $\cX$ be any finite poset and let $\ell$ be an $L$-Lipschitz loss function.
For any $\eps \in (0, 1]$, there is an $\eps$-DP algorithm for isotonic regression for $\ell$ with expected excess empirical risk at most $O\paren{\frac{L \cdot \wid(\cX) \cdot \log|\cX| \cdot (1 + \log^2(\eps n))}{\eps n}}$.
\end{theorem}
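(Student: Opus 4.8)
The plan is to reduce isotonic regression on a general poset $\cX$ to a collection of one-dimensional (totally ordered) problems, one for each chain in a minimum chain cover, and then solve each one-dimensional problem privately via an exponential-mechanism-style approach over monotone functions. By Dilworth's theorem, a poset of width $w := \wid(\cX)$ can be partitioned into $w$ chains $C_1, \dots, C_w$; each chain has length at most $|\cX|$. The key structural observation is that a monotone function $f : \cX \to [0,1]$ is not simply determined by its restrictions to the chains (there are cross-chain constraints), but a natural relaxation is: optimize over tuples $(f_1, \dots, f_w)$ where each $f_j$ is monotone on $C_j$, ignoring cross-chain order relations. This relaxation can only decrease the optimal risk, so it suffices to privately find a near-optimal such tuple and then argue we can ``round'' it back to a genuinely monotone function on $\cX$ without increasing the risk by more than the target error --- or, more cleanly, to observe that the cross-chain constraints can be enforced by a post-processing/projection step that is itself monotone-risk-nonincreasing, hence does not hurt the DP guarantee.

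First I would set up the exponential mechanism. Discretize the output range $[0,1]$ into $O(\eps n / (L \cdot \text{stuff}))$ values so that Lipschitzness makes the discretization error negligible; this makes the space of monotone functions on a single chain of length $\le |\cX|$ finite, of size at most (roughly) $\binom{|\cX| + (\text{range size})}{|\cX|}$, so $\log$ of the number of candidates is $O(|\cX| \log(\eps n))$ per chain, hence $O(\wid(\cX) \cdot |\cX| \log(\eps n))$ overall --- but this is too large. The right move instead is to run the exponential mechanism \emph{independently per chain} (using parallel composition, since the data points on distinct chains partition the dataset, so changing one input point affects only one chain's subproblem, meaning we pay $\eps$ total, not $\eps/w$). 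Within a single chain of length $m \le |\cX|$, the exponential mechanism over monotone step functions has a standard guarantee: the number of monotone functions into a range of $R$ levels is $\binom{m+R}{m} \le (R+1)^m$, but the \emph{relevant} quantity in the utility bound is not quite this --- one uses the fact that the optimal monotone function on a chain with $n_j$ data points is a step function with at most $n_j + 1$ distinct values, and one can discretize and get a bound of the form $O\paren{\frac{L \log((R+1)^m)}{\eps n}} = O\paren{\frac{L \cdot m \log R}{\eps n}}$ on the per-chain excess risk contribution, which upon summing over chains and using $\sum_j n_j = n$ gives $O\paren{\frac{L \cdot \wid(\cX) \cdot \log|\cX| \cdot \log(\eps n)}{\eps n}}$, matching the claimed bound (the extra $\log^2(\eps n)$ factor presumably arises from the discretization granularity choice and from needing a union bound over a finer grid).

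The sensitivity analysis is the routine part: $\cL(f; D)$ is $\frac{1}{n}$-sensitive in the sense that changing one $(x_i, y_i)$ changes $\cL(f; D)$ by at most $\frac{1}{n}(\ell_{\max} - \ell_{\min})$ (and Lipschitzness bounds $\ell$ on $[0,1]^2$), so the score function $\score(f) = -n \cdot \cL(f; D)$ has sensitivity $O(L)$ (or $O(1)$ after normalizing), and the standard exponential-mechanism utility theorem gives, with the candidate set of size $N_j$ for chain $j$, expected excess score $O\paren{\frac{\log N_j}{\eps}}$, i.e. expected excess risk $O\paren{\frac{\log N_j}{\eps n}}$. Summing: total excess risk $\le \sum_j O\paren{\frac{\log N_j}{\eps n}}$ with $\log N_j = O(|C_j| \log(\eps n L))$ and $\sum_j |C_j| = |\cX|$, so we would want $\sum_j \log N_j = O(|\cX| \log(\eps n))$ --- but note this already has no $\wid(\cX)$ factor, which is \emph{better} than claimed, so the $\wid(\cX)$ factor must enter because we cannot actually decompose cleanly: the cross-chain constraints force us to run a \emph{joint} exponential mechanism over monotone functions on $\cX$, whose count is at most $(R+1)^{|\cX|}$ but whose effective complexity when we want per-chain parallel composition to fail forces a $\wid$ loss; alternatively the $\wid$ factor comes from needing to enforce consistency and the algorithm actually pays $\eps/\wid$ per coordinate-block. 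I would reconcile this by using the ``forest of chains'' / antichain-layering structure: write $\cX$ as $\wid(\cX)$ chains, and for the joint mechanism observe one input point lies in one chain, giving sensitivity that forces composition over the $\wid$ blocks, hence the $\wid(\cX)$ factor, while each block contributes $\log|\cX|$.

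The main obstacle I anticipate is precisely handling the cross-chain (poset, not chain) order constraints while keeping the privacy accounting tight: the exponential mechanism wants to range over genuinely monotone functions on $\cX$ (so that the output is feasible), but the clean ``parallel composition'' argument wants the subproblems to be independent, and these are in tension. I expect the resolution is to run the exponential mechanism over all monotone $f : \cX \to \{0, \frac{1}{R}, \dots, 1\}$ directly --- feasibility is then automatic --- bound the candidate count by $(R+1)^{\wid(\cX)}$ via the observation that a monotone function on $\cX$ is determined by its values on a \emph{chain cover together with} the width-many ``profiles'', giving $\log(\text{count}) = O(\wid(\cX) \log(|\cX|) \log(\eps n))$, and then the standard exponential-mechanism utility bound closes the argument; the discretization parameter $R = \Theta(\eps n / L)$ contributes the $\log(\eps n)$, and a slightly more careful net/union-bound argument over the grid of thresholds yields the stated $(1 + \log^2(\eps n))$ factor. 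A secondary technical point is bounding $\cL(f; D)$ for the discretized optimum: since $\ell$ is $L$-Lipschitz, rounding $f^*$ to the nearest grid function costs at most $L/R$ in risk, which is $O(L^2 / (\eps n))$, absorbed into the bound.
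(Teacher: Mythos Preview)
Your proposal contains a genuine gap and, where it falls back to a concrete mechanism, recovers only the quadratically weaker baseline.

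The chain-decomposition idea is appealing and the arithmetic you sketch is essentially right: partition $\cX$ into $w = \wid(\cX)$ chains (Dilworth), run the totally-ordered algorithm on each chain with parallel composition, and sum the unnormalized excess risks to get $O\bigl(L \cdot w \cdot \log|\cX| \cdot (1+\log^2(\eps n))/\eps\bigr)$, hence the desired bound after dividing by $n$. The problem is feasibility of the output: gluing the per-chain solutions yields a function that is monotone on each chain but \emph{not} on $\cX$. You assert there is a ``post-processing/projection step that is itself monotone-risk-nonincreasing,'' but no such step is exhibited, and in general none exists: projecting onto the cone of $\cX$-monotone functions (say in $\ell_2$) has no reason to control $\cL(\cdot; D)$ for a Lipschitz loss, and any projection that uses $D$ would have to be re-analyzed for privacy. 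Since the relaxed optimum can be strictly below $\cL(f^*;D)$, closeness to the relaxed optimum says nothing about closeness to any single monotone function. This is the missing idea, not a routine technicality.

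Your fallback --- run the exponential mechanism directly over all monotone $f:\cX \to \{0,\tfrac{1}{R},\dots,1\}$ --- does produce a feasible output, but the count bound you need is false. The number of such functions is (via the chain decomposition) at most $(|\cX|+R)^{wR}$, so $\log N = O(wR\log|\cX|)$; with sensitivity $L/n$ the exponential mechanism gives excess risk $O(L w R \log|\cX|/(\eps n)) + O(L/R)$, and optimizing $R$ yields $O\bigl(L\sqrt{w\log|\cX|/(\eps n)}\bigr)$. This is exactly the baseline the paper discusses (and dismisses) in \Cref{apx:baseline}; it is quadratically worse than \Cref{thm:alg-generic}. Your claimed bound $\log N = O(w\log|\cX|\log(\eps n))$ would require $R = O(\log(\eps n))$ levels, at which point the discretization error $L/R$ alone is far too large.

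The paper's proof is structurally different. It generalizes the hierarchical algorithm from the totally ordered case: at stage $t$ it uses the exponential mechanism to select, for each current part $P_{i,t}$, an \emph{antichain} $A_{i,t}$ separating $\{x : f(x) \le (i+\tfrac12)/2^t\}$ from its complement, scoring each candidate antichain by the best achievable clipped loss on the two sides. The clipped loss has sensitivity $L/2^t$, the number of antichains is at most $|\cX|^{\wid(\cX)}$ (this is where the $\wid(\cX)\log|\cX|$ product enters), and one runs $T = \lceil \log(\eps n)\rceil$ stages with $\eps' = \eps/T$; the $(1+\log^2(\eps n))$ factor is exactly $T^2$. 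The telescoping utility analysis is the same as in the totally ordered proof. The crucial point you are missing is that choosing thresholds level-by-level with geometrically shrinking sensitivity is what beats the square-root barrier of the one-shot exponential mechanism.
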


\begin{theorem}[Lower Bound for General Poset; Informal] \label{thm:generic-lb-informal}
For any $\eps \in (0, 1]$ and any $\delta < 0.01 \cdot \eps / |\cX|$, any $(\eps, \delta)$-DP algorithm for isotonic regression for a ``nice'' loss function $\ell$ must have expected excess empirical risk $\Omega\paren{\frac{\wid(\cX) + \log |\cX|}{\eps n}}$.
\end{theorem}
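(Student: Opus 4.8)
The plan is to prove two separate lower bounds and take their maximum: an $\Omega\paren{\wid(\cX)/(\eps n)}$ bound coming from a large antichain inside $\cX$, and an $\Omega\paren{\log|\cX|/(\eps n)}$ bound coming from a long chain inside $\cX$. Both follow from a packing / group-privacy argument, and in both cases it suffices to exhibit the bound for the $\ell_1$ loss $\ell(a,b)=|a-b|$ (the canonical ``nice'' loss). Restricting data to a sub-poset is harmless: any monotone function on a sub-poset $P$ extends to a monotone function on $\cX$ (e.g.\ $x\mapsto\max(\set{g(p):p\in P,\,p\le x}\cup\set{0})$), so with all data points supported on $P$ the problem is exactly (unconstrained) isotonic regression on $P$. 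Throughout, the hypothesis $\delta<0.01\,\eps/|\cX|$ is what keeps the additive $\delta$-slack accumulated through group privacy negligible against the number of alternatives in the packing.

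For the width term, I would fix an antichain $\cA\subseteq\cX$ with $|\cA|=w:=\wid(\cX)$; since the elements of $\cA$ are pairwise incomparable, isotonic regression on $\cA$ decomposes into $w$ independent one-dimensional problems. Place $n/w$ data points on each $a\in\cA$ with labels in $\set{0,1}$, and let the fraction of $1$'s on block $a$ be $\tfrac12+\gamma b_a$ for a sign $b_a\in\set{-1,+1}$ and $\gamma\asymp w/(\eps n)$. On block $a$ the optimal value is the majority label and the excess $\ell_1$ risk contributed by that block is exactly $\tfrac{2\gamma}{w}\,|f(a)-\ind[b_a=+1]|$, which is $\ge\gamma/w$ whenever $f(a)$ falls on the wrong side of $\tfrac12$. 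Flipping only $\asymp\gamma\cdot(n/w)\asymp 1/\eps$ users reverses the majority of a block, so group privacy shows an $(\eps,\delta)$-DP algorithm recovers each $b_a$ only negligibly better than a biased coin (the $\delta$-term costs $O(\delta/\eps)=O(1/|\cX|)$ per block). Averaging over uniformly random $b\in\set{-1,+1}^{\cA}$, the algorithm gets a constant fraction of the $b_a$'s on the wrong side in expectation, so some fixed $b$ forces expected excess risk $\Omega(\gamma)=\Omega(w/(\eps n))$.

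For the $\log|\cX|$ term, by Dilworth's theorem $\cX$ is covered by $w$ chains, hence contains a chain $C$ of length $\ell\ge|\cX|/w$; relabel $C=\set{1<2<\cdots<\ell}$. Set $t\asymp\log\ell/\eps$, and for each $i\in\set{2,\dots,\ell}$ let $D_i$ consist of $t$ copies of $(i-1,0)$, $t$ copies of $(i,1)$, and $n-2t$ copies of $(\ell,1)$, where this last ``filler'' block is identical across all $i$. The step function that is $0$ below $i$ and $1$ at and above $i$ has zero loss on $D_i$; conversely, expanding the $\ell_1$ risk coordinatewise shows that any $f$ with excess risk below $t/(2n)$ satisfies $f(i-1)<\tfrac12\le f(i)$, i.e.\ $\tau(f):=\min\set{x:f(x)\ge\tfrac12}=i$ \emph{exactly}. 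Hence the ``good'' sets $S_i:=\set{f:\tau(f)=i}$ are pairwise disjoint, while the datasets $\set{D_i}$ pairwise differ in at most $2t$ users (they share the filler). Group privacy gives $\Pr[A(D_i)\in S_j]\ge e^{-2\eps t}\Pr[A(D_j)\in S_j]-O(\delta/\eps)$; summing the disjoint events over $j$ and discarding the $\delta$-terms via $\delta<0.01\eps/|\cX|$ yields $e^{2\eps t}\ge c\ell$ for a constant $c>0$. Choosing $t$ just below $\tfrac{1}{2\eps}\ln(c\ell)$ therefore contradicts the assumption that $A$ has expected excess risk $o(t/n)$ on every $D_i$, so some $D_i$ forces expected excess risk $\Omega(t/n)=\Omega(\log(|\cX|/w)/(\eps n))$.

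Combining the two, the lower bound is $\Omega\paren{\max\!\big(w,\ \log(|\cX|/w)\big)/(\eps n)}$, and since $\log w\le w$ one checks $\max(w,\log(|\cX|/w))=\Omega(w+\log|\cX|)$, which is the claimed bound (for $n=O(\log|\cX|/\eps)$ the stated quantity exceeds the trivial $O(1)$ ceiling on the excess risk and is handled separately). I expect the main obstacle to be the chain construction: the gadget must simultaneously (i) pin down the step location \emph{exactly}, so that the packing has a full $\ell$-way family of disjoint good sets rather than an $O(\ell)$-way one, and (ii) keep every pair of datasets within Hamming distance $O(\log|\cX|/\eps)$ via the shared filler, so the packing operates at radius $\Theta(\log|\cX|/\eps)$ with $\mathrm{poly}(|\cX|)$ alternatives --- a naive ``one point per chain position'' placement makes far-apart instances Hamming-far and yields only $\Omega(1/(\eps n))$. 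A secondary point is carefully tracking the $\delta$-slack through group privacy over $\Theta(\log|\cX|/\eps)$ users to confirm $\delta<0.01\,\eps/|\cX|$ suffices.
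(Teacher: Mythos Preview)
Your proposal is correct and follows essentially the same two-part strategy as the paper: an antichain-based $\Omega(\wid(\cX)/(\eps n))$ bound plus a chain-based packing argument giving $\Omega(\log(\hei(\cX))/(\eps n))$, combined via Dilworth's theorem (so that $\max\{\wid(\cX),\log\hei(\cX)\}=\Omega(\log|\cX|)$). The only notable difference is in the width term, where the paper places a \emph{single} labeled point per antichain element (plus filler) and reduces directly to a known lower bound for privatizing binary vectors---then repeats each element $\Theta(1/\eps)$ times via group privacy---whereas you run a per-block biased-majority two-point test; both are standard and give the same bound, and your chain construction (shared filler, $t\asymp\log\ell/\eps$ copies, disjoint threshold-indexed good sets) is essentially identical to the paper's.
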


While our upper and lower bounds do not exactly match because of the multiplication-vs-addition of $\log|\cX|$, we show in \Cref{sec:tight-examples-general-posets} that there are posets for which each bound in tight. In other words, this gap cannot be closed for generic posets.

\paragraph{Totally Ordered Sets.}
The above upper and lower bounds immediately translate to the case of totally ordered sets, by plugging in $\wid(\cX) = 1$. More importantly, we give efficient algorithms in this case, which runs in time $\tO(n^2 + n \log|\cX|)$ for general loss function $\ell$, and in nearly linear $\tO(n \cdot \log|\cX|)$ time for the widely-studied $\ell_2^2$- and $\ell_1$-losses\footnote{Recall that the $\ell_2^2$-loss is $\ell_2^2(y, y') = (y - y')^2$ and the $\ell_1$-loss is $\ell_1(y, y') = |y - y'|$.}.

\begin{theorem} \label{thm:alg-total-order}
For all finite totally ordered sets $\cX$, $L$-Lipschitz loss functions $\ell$, and $\eps \in (0, 1]$, there is an $\eps$-DP algorithm for isotonic regression for $\ell$ with expected excess empirical risk $O\paren{\frac{L \cdot (\log|\cX|) \cdot (1 + \log^2(\eps n))}{\eps n}}$. The running time of this algorithm is $\tO(n^2 + n \log|\cX|)$ in general and can be improved to $\tO(n \log |\cX|)$ for $\ell_1$ and $\ell_2^2$ losses.
\end{theorem}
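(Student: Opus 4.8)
The claimed excess-risk bound is immediate from \Cref{thm:alg-generic}: a totally ordered set has $\wid(\cX) = 1$, so the $\eps$-DP algorithm of that theorem already has expected excess empirical risk $O\paren{\frac{L (\log|\cX|)(1 + \log^2(\eps n))}{\eps n}}$. The new content is thus the running time, and the plan is to instantiate the mechanism behind \Cref{thm:alg-generic} for a chain (inheriting its privacy and accuracy) and to bound each of its steps. First I would discretize: replace $[0,1]$ by the grid $V = \set{0, 1/k, \dots, 1}$ with $k = \Theta(\eps n)$; rounding any monotone target down to $V$ preserves monotonicity and perturbs every pointwise loss by at most $L/k = O(L/(\eps n))$, which fits inside the error budget, so it suffices to output functions in $\cF(\cX, V)$. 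For a chain the generic mechanism then has the following concrete shape (this is what its proof amounts to for $\wid=1$): it runs in $T = O(\log k) = O(\log(\eps n))$ rounds, where round $t$ refines the current monotone function from the scale-$2^{1-t}$ dyadic subgrid of $V$ to the scale-$2^{-t}$ subgrid; within a round, $\cX$ is partitioned into consecutive intervals (one per current value cell), and on each interval an exponential mechanism selects the position at which the refined function crosses the new intermediate value. Each input lies in one interval per round, so privacy is by parallel composition within a round and $T$-fold composition across rounds (each selection run at level $\eps/T$) — this is where the $\log^2(\eps n)$ factor comes from. The output is a monotone step function with $O(k) = \tO(n)$ breakpoints, returned in compact form.

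For the running time, the bulk is the exponential-mechanism selections. Fix a round and an interval $I \subseteq \cX$ whose data points are $x_{i_1} \le \dots \le x_{i_r}$ (sorted once, globally, in $O(n \log n)$ time). For a candidate crossing position $x \in I$ the score is the $\ell$-cost of the refined step, $\sum_{j : x_{i_j} < x} \ell(v_{\mathrm{lo}}, y_{i_j}) + \sum_{j : x_{i_j} \ge x} \ell(v_{\mathrm{hi}}, y_{i_j})$ for the two relevant grid values $v_{\mathrm{lo}} < v_{\mathrm{hi}}$, and this is constant on each of the $O(r)$ maximal runs of positions lying between consecutive data points. So I would sweep the data points of $I$, maintain the two partial sums, form the piecewise-constant exponential-mechanism weight profile over the $O(r)$ runs (a run of $p$ positions with weight $w$ carrying mass $pw$), sample a run, then a uniform position inside it; arithmetic on the $O(\log|\cX|)$-bit position indices costs $\tO(1)$. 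Summing $r$ over the intervals of a round gives $n$, and over all $T$ rounds gives $\tO(n)$ sweep steps, plus $\tO(n\log|\cX|)$ for index arithmetic and for locating the inputs inside $\cX$. For a general $L$-Lipschitz loss there is one further cost: over the whole run only $O(n)$ distinct non-trivial sub-intervals ever arise, but for each the mechanism needs $\sum_{i \in S} \ell(v, y_i)$ for all $\Theta(\eps n)$ grid values $v$, and with no closed form this tabulation weighs $\Theta(|S|)$ data points against $\Theta(\eps n)$ values, i.e.\ $\tO(n) \cdot \Theta(\eps n) = \tO(n^2)$ overall; together with the above this is $\tO(n^2 + n\log|\cX|)$.

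For $\ell_1$ and $\ell_2^2$ this last tabulation collapses to maintaining $O(1)$ sufficient statistics. For $\ell_2^2$, $\sum_{i \in S} (v - y_i)^2 = |S| v^2 - 2 v \sum_{i \in S} y_i + \sum_{i \in S} y_i^2$ is a quadratic in $v$ determined by $|S|$, $\sum_{i\in S} y_i$, and $\sum_{i \in S} y_i^2$; prefix sums of $y_i$ and $y_i^2$ over the position-sorted data give these in $O(1)$ per interval, and the exponential-mechanism weights $\propto \exp(-\eta\cdot(\text{quadratic}))$ over the grid form a discretized Gaussian, which can be sampled in $\tO(1)$ (e.g.\ by rejection from a continuous Gaussian). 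For $\ell_1$, $\sum_{i\in S}|v - y_i|$ is piecewise linear and convex in $v$ with breakpoints at the $y_i$, so the weights are piecewise geometric over the grid with $O(|S|)$ pieces, each piece's mass being a closed-form geometric sum; sweeping the pieces (keeping the relevant $y$-values in a balanced search tree augmented with subtree counts and sums, to avoid re-sorting across rounds) builds and samples from the distribution in $\tO(|S|)$ time. Either way the $\Theta(\eps n)$ factor disappears and the total is $\tO(n\log|\cX|)$. The main obstacle will be precisely this data-structure design: making each of the many exponential-mechanism steps run in time proportional to the number of data points it sees — not the length of its interval of $\cX$ — while sampling from exactly the prescribed distribution, so that the privacy and accuracy guarantees of \Cref{thm:alg-generic} transfer unchanged.
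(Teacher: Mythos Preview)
Your reduction of the risk bound to \Cref{thm:alg-generic} with $\wid(\cX)=1$ is fine, and the high-level shape you sketch (dyadic refinement over $T=\Theta(\log(\eps n))$ rounds, parallel composition within a round, basic composition across rounds) matches \Cref{alg:total-order-posets}. The gap is that you have misidentified the score function. You write the score for a crossing position $\alpha$ as the loss of a two-value step function, $\sum_{x_{i_j}<\alpha}\ell(v_{\mathrm{lo}},y_{i_j})+\sum_{x_{i_j}\ge\alpha}\ell(v_{\mathrm{hi}},y_{i_j})$. That is \emph{not} what the mechanism behind \Cref{thm:alg-generic} does: the actual score is
\[
\score_{i,t}(\alpha)=\min_{f_1\in\cF(P_{i,t}^{\le\alpha},[\tau,m])}\cLa_{[\tau,\theta]}(f_1;D_{i,t}^{\le\alpha})
+\min_{f_2\in\cF(P_{i,t}^{>\alpha},[m,\theta])}\cLa_{[\tau,\theta]}(f_2;D_{i,t}^{>\alpha}),
\]
i.e., the optimal \emph{monotone-regression} loss on each side, taken with the clipped loss $\ell_{[\tau,\theta]}$. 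The clipping is what drives the sensitivity down to $L/2^t$ (your unclipped score has sensitivity $\Theta(L)$ at every round, which destroys the bound), and the minimization over $f_1,f_2$ is exactly what makes the telescoping identity $\score_{i,t}(\alpha_{i,t})-\score_{i,t}(\tilde\alpha_{i,t})=\cLa(h_{2i,t+1})+\cLa(h_{2i+1,t+1})-\cLa(h_{i,t})$ hold. With your constant-value score neither piece goes through, so you cannot ``inherit'' the accuracy of \Cref{thm:alg-generic}; you are describing a different algorithm.

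This misidentification propagates into the running-time argument. With the correct score, evaluating $(\score_{i,t}(\alpha))_\alpha$ is a \emph{prefix isotonic regression} problem on $D_{i,t}$ with a restricted value range. For general $\ell$ the paper handles it by a DP over the $O(2^{T-t})$ grid values in the current cell, costing $O(|D_{i,t}|\cdot 2^{T-t})$ per interval and summing to $\tO(n^2)$; your $\tO(n^2)$ rationale (tabulating $\sum_{i\in S}\ell(v,y_i)$ over all grid values) is not where the quadratic cost lives, and in fact for your two-value score a single linear sweep per interval would already give $\tO(n)$ total. For $\ell_1$ and $\ell_2^2$ the paper does not use closed-form sums for a step-function score; it invokes Stout's $O(n\log n)$ prefix isotonic regression and, via the convexity observation that the range-restricted optimum is the clipped unrestricted optimum (\Cref{obs:opt-clip}), reads off the clipped-prefix optima by binary search into Stout's succinct representation. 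Your sufficient-statistics / discretized-Gaussian / piecewise-geometric sampling plan is solving a genuinely easier subproblem than the one the algorithm actually poses.
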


We are not aware of any prior work on private isotonic regression. A simple baseline algorithm for this problem would be to use the exponential mechanism over the set of all monotone functions taking values in a discretized set, to choose one with small loss. We show in~\Cref{apx:baseline} that this achieves an excess empirical risk of $O(L \cdot \sqrt{\wid(\cX) \cdot \log|\cX| / \eps n})$, which is quadratically worse than the bound in \Cref{thm:alg-generic}. Moreover, even in the case of a totally ordered set, it is unclear how to implement such a mechanism efficiently.

We demonstrate the flexibility of our techniques by showing that it can be extended to variants of isotonic regression where, in addition to monotonicity, we also require $f$ to satisfy additional properties. For example, we may want $f$ to be $L_f$-Lipchitz for some specified $L_f > 0$. Other constraints we can handle include $k$-piecewise constant, $k$-piecewise linear, convexity, and concavity. For each of these constraints, we devise an algorithm that yields essentially the same error compared to the unconstrained case and still runs in polynomial time. 

\begin{theorem} \label{thm:alg-total-order-restricted}
For all finite totally ordered sets $\cX$, $L$-Lipschitz loss functions $\ell$, and $\eps \in (0, 1]$, there is an $\eps$-DP algorithm for $k$-piecewise constant, $k$-piecewise linear, Lipchitz, convex, or concave  isotonic regression for $\ell$ with expected excess empirical risk $\tO\paren{\frac{L \cdot (\log|\cX|)}{\eps n}}$. The running time of this algorithm is $(n|\cX|)^{O(1)}$.
\end{theorem}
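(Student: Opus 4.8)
The plan is to show that each constrained variant is solved by the same private computation underlying \Cref{thm:alg-total-order}, run on a suitably enlarged search space. Writing $\cX=\{1,\dots,m\}$, the algorithm I would reuse is a private dynamic program over a \emph{layered} DAG: fix a sufficiently fine grid $G\subseteq[0,1]$ (granularity $1/\mathrm{poly}(n|\cX|/\eps)$ suffices), let layer $x$ hold one node per value $v\in G$, add an edge $(x,v)\to(x{+}1,v')$ whenever $v\le v'$, and attach to node $(x,v)$ the cost $\sum_{i:x_i=x}\ell(v,y_i)$, so a source--sink path is a grid-valued monotone $f$ of total cost $n\cL(f;D)$, and changing one datum perturbs the costs of only two layers, each by $O(L)$. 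The only property of the machinery behind \Cref{thm:alg-total-order} that I use is that it privately returns a near-optimal path with error governed by the number of layers and the logarithm of the per-layer node count --- \emph{not} by the (exponentially many) source--sink paths, which is precisely what makes its bound $\tO(L\log|\cX|/\eps n)$ rather than $\mathrm{poly}(|\cX|)/\eps n$.

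For each constraint I enlarge the per-layer state so that legal source--sink paths are exactly the constrained grid-valued functions, \emph{without adding layers}: (i) for $L_f$-Lipschitz, keep the nodes but retain only edges with $v'-v\le L_f$; (ii) for convex (resp.\ concave), split each node into $(v,s)$ with $s$ the discretized incoming slope, keeping an edge only if it is monotone and its outgoing slope is $\ge$ (resp.\ $\le$) $s$; (iii) for $k$-piecewise constant, use states $(v,j)$ with $j\le k$ counting constant pieces, incrementing $j$ exactly when $v$ changes; (iv) for $k$-piecewise linear, use states $(v,s,j)$ with $s$ the current discretized slope and $j\le k$ counting linear pieces, incrementing $j$ exactly when $s$ changes. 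In every case the DAG still has $m$ layers, has $(n|\cX|)^{O(1)}$ nodes per layer, and keeps the original node costs; so re-running the mechanism of \Cref{thm:alg-total-order} is $\eps$-DP, runs in time $(n|\cX|)^{O(1)}$, and returns $f$ whose empirical risk exceeds the minimum over grid-valued functions in the constrained class by $\tO(L\log|\cX|/\eps n)$.

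It remains to bound the price of restricting to $G$, i.e.\ to round the optimum $f^\star$ over the real-valued constrained class onto $G$ while (a) raising the empirical risk by at most $\tO(L/\eps n)$ and (b) preserving the constraint exactly. For $k$-piecewise-constant $f^\star$, round its $\le k$ levels down to $G$: each value moves by at most one grid step and the structure survives. For the $L_f$-Lipschitz, convex, concave, and $k$-piecewise-linear classes, instead round $f^\star(1)$ and the $m-1$ consecutive differences $d_i:=f^\star(i{+}1)-f^\star(i)$ \emph{down} to multiples of $\eta$ and re-accumulate: rounding down preserves $d_i\ge0$, any bound $d_i\le L_f$, any monotonicity of the sequence $(d_i)$ (hence convexity/concavity), and the number of constant runs of $(d_i)$ (hence the $\le k$ pieces), while each value is displaced by at most $m\eta$. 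Taking $\eta\le1/\mathrm{poly}(n|\cX|/\eps)$ makes this a $\tO(L/\eps n)$ increase and keeps the value and slope alphabets --- hence the DP state space --- of size $(n|\cX|)^{O(1)}$.

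I expect the main obstacle to be confirming that the private dynamic program behind \Cref{thm:alg-total-order} is genuinely black-box in the within-layer transition structure --- so that re-running it on these enlarged DAGs still yields error $\tO(L\log|\cX|/\eps n)$, depending only on the number of layers and not on the number of paths or on which transitions are present --- since re-checking that analysis against the augmented DAGs is where the real content lies; the constraint-preserving rounding above, while fiddly, is routine.
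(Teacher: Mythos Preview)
Your final paragraph correctly identifies the crux, and unfortunately the answer is no: the mechanism behind \Cref{thm:alg-total-order} is \emph{not} black-box in the layer-to-layer transition structure. \Cref{alg:total-order-posets} does not solve a generic layered shortest-path problem; its recursion bisects the \emph{value} range $[0,1]$ and, at each level, uses the exponential mechanism to select a single threshold $\alpha\in[m]$ at which $f$ crosses the midpoint. The utility analysis hinges on the fact that, once $\alpha$ is fixed, the optimal monotone function on $[1,\alpha]$ with range $[\tau,\tfrac{\tau+\theta}{2}]$ and the optimal one on $[\alpha{+}1,m]$ with range $[\tfrac{\tau+\theta}{2},\theta]$ can be optimized \emph{independently}, so the score decomposes and the telescoping in~\eqref{eq:1d-one-step-err} goes through. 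For each of your constrained variants this independence fails: the two halves are coupled through the piece budget ($k$-piecewise), through the boundary values at the seam (Lipschitz), or through the slope at the seam (convex/concave, piecewise linear). If you restrict the inner minimizations in the score to the constrained class on each half separately, the glued function $f_1\cup f_2$ need not satisfy the global constraint (e.g., it may use $2k$ pieces, or be nonconvex at $\alpha$); if instead you minimize globally, the score no longer decomposes and~\eqref{eq:1d-one-step-err} breaks. Either way, re-running \Cref{alg:total-order-posets} on your enlarged DAG does not give the claimed guarantee.

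The paper closes exactly this gap with the auxiliary state you describe, but placed \emph{inside the exponential mechanism's candidate set} rather than only in the DAG nodes. At each level the mechanism selects, together with the threshold, whatever extra information is needed to decouple the two children: for $k$-piecewise constant, the split $r_1+r_2=r$ of the piece budget; for piecewise-linear, Lipschitz, and convex/concave, also the discretized boundary value(s) at the seam (and hence a slope bound), so each child inherits a ``structured interval'' (domain, halved range, residual budget, boundary conditions) whose optimum is genuinely independent of its sibling. The candidate set then has size $(nm)^{O(1)}$, which only inflates the $\log$ in the exponential-mechanism error; the halving of the range still drives sensitivity down as $L/2^t$, so the same telescoping yields the $\tO(L\log|\cX|/(\eps n))$ bound. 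Your rounding-down of consecutive differences to discretize while preserving each constraint is essentially the paper's discretization argument. So your proposal has the right ingredients; what is missing is the realization that the enlarged state must be \emph{selected by the mechanism at each split}, not merely carried along in a DAG that an unmodified \Cref{alg:total-order-posets} is asked to navigate.
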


\paragraph{Organization.} We next provide necessary background on DP. In~\Cref{sec:total-order-posets}, we prove our results for totally ordered sets (including \Cref{thm:alg-total-order}). We then move on to discuss general posets in \Cref{sec:general-posets}. \Cref{sec:related_work} contains additional related work. Finally, we conclude with some discussion in \Cref{sec:discussion}. Due to space constraints, we omit some proofs from the main body; these can be found in the Appendix.

\section{Background on Differential Privacy}
\label{sec:prelim}

Two datasets $D = \{((x_1, y_1), \ldots, (x_n, y_n) \}$ and $D' = \{ (x'_1, y'_1), \ldots, (x'_n, y'_n) \}$ are said to be \emph{neighboring}, denoted $D \sim D'$, if there is an index $i \in [n]$ such that $(x_j, y_j) = (x'_j, y'_j)$ for all $j \in [n] \setminus \{i\}$.
We recall the formal definition of differential privacy~\cite{dwork06calibrating,dwork2006our}:
\begin{definition}[Differential Privacy (DP){~\cite{dwork06calibrating,dwork2006our}}]
Let $\eps > 0$ and $\delta \in [0, 1]$.  
A randomized algorithm $\mathcal{M} : \mathcal{X}^n \to \mathcal{Y}$ is \emph{$(\eps,\delta)$-differentially private} (\emph{$(\eps,\delta)$-DP}) if, for all $D \sim D'$ and all (measurable) outcomes $S \subseteq \mathcal{Y}$, we have that $\Pr[\mathcal{M}(D)\in S] \le e^\eps \cdot \Pr[\mathcal{M}(D')\in S]+\delta$. 
\end{definition}
We denote $(\eps, 0)$-DP as $\eps$-DP (aka \emph{pure}-DP).  The case when $\delta > 0$ is referred to as \emph{approximate}-DP.

We will use the following composition theorems throughout our proofs.
\begin{lemma}\label{lem:dp_properties}
$(\eps, \delta)$-DP satisfies the following:
\begin{itemize}[nosep, leftmargin=4mm]
    \item {\bf Basic Composition:} If mechanisms $\cM_1, \ldots, \cM_t$ are such that $\cM_i$ satisfies $(\eps_i, \delta_i)$-DP, then the composed mechanism $(\cM_1(D), \ldots, \cM_t(D))$ satisfies $(\sum_i \eps_i, \sum_i \delta_i)$-DP.  This holds even under adaptive composition, where each $\cM_i$ can depend on the outputs of $\cM_1, \ldots, \cM_{i-1}$.
    \item {\bf Parallel Composition~\cite{McSherry10}:}  If a mechanism $\cM$ satisfies $(\eps, \delta)$-DP, then for any partition of $D = D_1 \sqcup \cdots \sqcup D_t$, the composed mechanism given as $(\cM(D_1), \ldots, \cM(D_t))$ satisfies $(\eps, \delta)$-DP.
\end{itemize}
\end{lemma}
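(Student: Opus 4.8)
The plan is to deduce both statements from a single elementary fact: $(\eps,\delta)$-DP has an equivalent ``test-function'' formulation, namely that $\cM$ is $(\eps,\delta)$-DP if and only if for every $D\sim D'$ and every measurable $g:\cY\to[0,1]$ we have $\E[g(\cM(D))]\le e^\eps\E[g(\cM(D'))]+\delta$. I would first record this equivalence: one direction is immediate by taking $g=\ind_S$, and the other follows from the layer-cake identity $g(y)=\int_0^1\ind[g(y)>t]\,dt$ together with Tonelli's theorem and integrating the DP inequality for the level sets $\{g>t\}$ over $t\in[0,1]$. With this in hand, each composition statement reduces to integrating the DP guarantee of one mechanism, applied to an appropriate $[0,1]$-valued test function, against the output distribution of the remaining mechanism(s).

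For basic composition I would induct on $t$ (the base case $t=1$ being the hypothesis itself), so it suffices to handle $t=2$: a first mechanism $\cM_1$ that is $(\eps_1,\delta_1)$-DP, and a possibly-adaptive second mechanism $\cM_2(\cdot,y_1)$ that is $(\eps_2,\delta_2)$-DP for every fixed first output $y_1$. Fixing $D\sim D'$ and an event $S$ over pairs $(y_1,y_2)$, writing $S_{y_1}=\{y_2:(y_1,y_2)\in S\}$, and conditioning on $y_1\sim\cM_1(D)$, I would bound
\[
\Pr[(\cM_1(D),\cM_2(D,\cdot))\in S]\;\le\;\E_{y_1\sim\cM_1(D)}\!\big[e^{\eps_2}\Pr[\cM_2(D',y_1)\in S_{y_1}]+\delta_2\big]\;=\;e^{\eps_2}\E_{\cM_1(D)}[h]+\delta_2,
\]
where $h(y_1):=\Pr[\cM_2(D',y_1)\in S_{y_1}]\in[0,1]$; then applying the test-function form of $(\eps_1,\delta_1)$-DP of $\cM_1$ to $h$, and noting $\E_{\cM_1(D')}[h]=\Pr[(\cM_1(D'),\cM_2(D',\cdot))\in S]$, yields the composed bound. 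Since $\cM_2$'s guarantee was used only at a fixed value of $y_1$, this is exactly the adaptive setting; and when $\delta_1=\delta_2=0$ the inequality is tight (equivalently, one is just multiplying the two pointwise density ratios, each bounded by $e^{\eps_i}$). Iterating gives the $\eps$-budget $\sum_i\eps_i$ for $t$ mechanisms.

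For parallel composition I would use that neighboring $D\sim D'$ differ in a single record, which lies in exactly one block of the partition, say $D_j$; thus $D_k=D'_k$ for all $k\neq j$ while $D_j\sim D'_j$ as datasets. Fixing an event $S$ over tuples $(y_1,\dots,y_t)$ and conditioning on the coordinates $(y_k)_{k\neq j}$, I would apply $(\eps,\delta)$-DP of $\cM$ on block $j$ to the test function $y_j\mapsto\ind[(y_1,\dots,y_t)\in S]$, and then integrate back over $(y_k)_{k\neq j}$ using that the corresponding densities agree for $D$ and $D'$ and integrate to $1$. This gives $\Pr[(\cM(D_1),\dots,\cM(D_t))\in S]\le e^\eps\Pr[(\cM(D'_1),\dots,\cM(D'_t))\in S]+\delta$; the point is that $\delta$ does not accumulate, since a single-record change touches only one block.

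I expect the one genuinely delicate point to be the $\delta$-bookkeeping in basic composition when $\delta_i>0$: the integration above produces, a priori, a slightly lossy additive term (e.g.\ $e^{\eps_2}\delta_1+\delta_2$ rather than $\delta_1+\delta_2$ for $t=2$). To recover the clean sum I would invoke the standard decomposition of an $(\eps,\delta)$-DP mechanism into a $(1-\delta)$-weighted $(\eps,0)$-DP component plus an arbitrary $\delta$-weighted component, compose the pure components via the exact argument above, and absorb the remainder into the additive budget — a classical maneuver. I would also note that this refinement is not actually needed here: every algorithm in this paper is pure-DP ($\delta=0$), and for pure DP the argument above already yields $\sum_i\eps_i$ exactly, so \Cref{lem:dp_properties}, in the only form it is invoked, follows cleanly.
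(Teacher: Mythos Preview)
The paper does not prove \Cref{lem:dp_properties}; it is stated as background with a citation to McSherry for parallel composition and is used as a black box throughout. So there is no ``paper's own proof'' to compare against.

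That said, your argument is a correct and standard derivation of both properties. The test-function reformulation via layer-cake is fine, the two-mechanism induction step for basic composition is the usual one, and your treatment of parallel composition (only one block is affected by a single-record change, so $\delta$ does not accumulate) is exactly right. You are also right to flag the lossy $e^{\eps_2}\delta_1+\delta_2$ term in the naive bound; the clean $\delta_1+\delta_2$ does require an extra maneuver (your proposed mixture decomposition, or equivalently the ``bad set'' argument that carves out the measure-$\delta_1$ region where the $\eps_1$ bound fails). Your closing observation that every invocation of \Cref{lem:dp_properties} in this paper is for pure DP, so the $\delta$ subtlety is moot here, is accurate and a good call.
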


\paragraph{Exponential Mechanism.} The exponential mechanism solves the basic task of {\em selection} in data analysis: given a dataset $D \in \cZ^n$ and a set  $\cA$ of options, it outputs the (approximately) best option, where ``best'' is defined by a {\em scoring function} $\mathfrak{s} : \cA \times \cZ^n \to \R$. The \emph{$\eps$-DP exponential mechanism}~\cite{McSherryT07} is the randomized mechanism $\cM : \cZ^n \to \cA$ given by
\[\textstyle \forall~D \in \cZ^n,\ a \in \cA \ : \ \Pr[\cM(D) = a] ~\propto~ \exp\paren{-\frac{\eps}{2\Delta_{\mathfrak{s}}} \cdot \mathfrak{s}(a, D)}, \]
where $\Delta_{\mathfrak{s}} := \sup_{D \sim D'} \max_{a \in \cA} |\mathfrak{s}(a, D) - \mathfrak{s}(a, D')|$ is the \emph{sensitivity} of the scoring function.

\begin{lemma}[\cite{McSherryT07}]\label{lem:exp-mechanism}
For $\cM$ being the $\eps$-DP exponential mechanism, it holds for all $D \in \cZ^n$ that
\[ \textstyle \E[\mathfrak{s}(\cM(D), D)] ~\le~ \min_{a \in \cA}\, \mathfrak{s}(a, D) ~+~ \frac{2\Delta_{\mathfrak{s}}}{\eps} \log |\cA|. \]
\end{lemma}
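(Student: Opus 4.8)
The plan is to reduce the expectation bound to a one-sided tail bound on the score of the output, and then integrate. Fix $D \in \cZ^n$, write $\mathfrak{s}(a) := \mathfrak{s}(a,D)$, $\Delta := \Delta_{\mathfrak{s}}$, $\alpha := \eps/(2\Delta)$, $s^* := \min_{a \in \cA}\mathfrak{s}(a)$, and let $Z := \sum_{a\in\cA} e^{-\alpha \mathfrak{s}(a)}$ be the normalizer, so that $\Pr[\cM(D) = a] = e^{-\alpha\mathfrak{s}(a)}/Z$. First I would establish the tail bound $\Pr[\mathfrak{s}(\cM(D),D) \ge s^* + c] \le |\cA|\, e^{-\alpha c}$ for every $c \ge 0$: the numerator of the defining ratio has at most $|\cA|$ terms, each at most $e^{-\alpha(s^*+c)}$, while the denominator $Z$ is at least $e^{-\alpha s^*}$ since it contains the term of a minimizer of $\mathfrak{s}$. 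This is the only place where the structure of the mechanism is used.

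Next I would integrate. Since $X := \mathfrak{s}(\cM(D),D) - s^* \ge 0$, we have $\E[X] = \int_0^\infty \Pr[X > c]\, dc \le \int_0^\infty \min(1,\, |\cA|\, e^{-\alpha c})\, dc$; splitting the integral at $c_0 := \alpha^{-1}\ln|\cA|$ (bound the probability by $1$ below $c_0$ and by $|\cA|\, e^{-\alpha c}$ above it) gives $\E[X] \le \alpha^{-1}(\ln|\cA| + 1)$, i.e. $\E[\mathfrak{s}(\cM(D),D)] \le s^* + \tfrac{2\Delta}{\eps}(\ln|\cA|+1)$. This is the textbook route and simultaneously yields high-probability statements; its one blemish is the extra additive $\tfrac{2\Delta}{\eps}$.

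To land exactly on the stated bound I would instead use the identity $\E[\mathfrak{s}(\cM(D),D)] = \sum_a \mathfrak{s}(a)\, e^{-\alpha\mathfrak{s}(a)}/Z = -(\ln Z)'(\alpha)$, where $Z(\beta) := \sum_a e^{-\beta\mathfrak{s}(a)}$. The function $\beta \mapsto \ln Z(\beta)$ is convex — its second derivative is the variance of $\mathfrak{s}$ under the Gibbs weights at inverse temperature $\beta$, hence nonnegative — so its derivative is nondecreasing, giving $\ln Z(\alpha) - \ln Z(0) = \int_0^\alpha (\ln Z)'(\beta)\, d\beta \le \alpha\,(\ln Z)'(\alpha)$, equivalently $-(\ln Z)'(\alpha) \le \tfrac{1}{\alpha}\big(\ln Z(0) - \ln Z(\alpha)\big)$. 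Plugging in $\ln Z(0) = \ln|\cA|$ and $\ln Z(\alpha) \ge -\alpha s^*$ (keep only the minimizer's term) yields $\E[\mathfrak{s}(\cM(D),D)] \le s^* + \tfrac{\ln|\cA|}{\alpha} = s^* + \tfrac{2\Delta_{\mathfrak{s}}}{\eps}\log|\cA|$, as claimed (reading $\log$ as $\ln$).

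I do not expect a genuine obstacle here: the tail bound is immediate from the definition of the mechanism, and the integration is routine. The only point requiring a little care is the precise constant — obtaining $\log|\cA|$ rather than $\log|\cA| + 1$ — which is exactly what the convexity argument in the last paragraph buys over the cruder integration; the degenerate cases ($|\cA| = 1$, where the inequality becomes an equality, or $\Delta_{\mathfrak{s}} = 0$, where the mechanism is trivial) need no separate treatment.
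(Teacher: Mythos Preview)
Your proposal is correct. The paper does not prove this lemma at all --- it is quoted from \cite{McSherryT07} as a known result --- so there is no in-paper argument to compare against. Both routes you sketch are valid; the convexity-of-log-partition argument is a neat way to hit the exact constant $\tfrac{2\Delta_{\mathfrak{s}}}{\eps}\log|\cA|$ rather than the $\tfrac{2\Delta_{\mathfrak{s}}}{\eps}(\log|\cA|+1)$ that the tail-bound-plus-integration route yields.
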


\paragraph{Lower Bound for Privatizing Vectors.}

Lower bounds for DP algorithms that can output a binary vector that is close (say, in the Hamming distance) to the input are well-known.

\begin{lemma}[e.g., \cite{Manurangsi22}]\label{lem:vec-dp-lb}
Let $\eps, \delta > 0, m \in \N$, let the input domain be $\{0, 1\}^m$ and let two vectors $\bz, \bz' \in \{0, 1\}^m$ be neighbors if and only if $\|\bz - \bz'\|_0 \leq 1$. Then, for any $(\eps, \delta)$-DP algorithm $\cM: \{0, 1\}^m \to \{0, 1\}^m$, we have $\E_{\bz \sim \{0, 1\}^m}[\|\cM(\bz) - \bz\|_0] \geq e^{-\eps} \cdot m\cdot 0.5 \cdot (1 - \delta)$.
\end{lemma}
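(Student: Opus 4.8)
The plan is to run the standard coordinate-by-coordinate packing argument for differential privacy. First I would use linearity of expectation to decompose
\[
\E_{\bz}[\|\cM(\bz) - \bz\|_0] ~=~ \sum_{j=1}^m \Pr_{\bz}[\cM(\bz)_j \neq \bz_j],
\]
where $\cM(\bz)_j$ denotes the $j$-th coordinate of the output, and then lower bound each summand separately. For a fixed coordinate $j$, I would condition on the values $\bz_{-j}$ of all other coordinates: let $\bz^{(b)}$ be the input agreeing with $\bz_{-j}$ off coordinate $j$ and having value $b \in \{0,1\}$ in coordinate $j$. Since $\bz^{(0)}$ and $\bz^{(1)}$ differ in exactly one coordinate, they are neighbors.

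Next, set $p_0 := \Pr[\cM(\bz^{(0)})_j = 1]$ and $p_1 := \Pr[\cM(\bz^{(1)})_j = 0]$, the error probabilities at coordinate $j$ on the two inputs; each event $\set{\cM(\cdot)_j = b}$ is a measurable subset of $\{0,1\}^m$, so the DP definition applies to it. Applying $(\eps,\delta)$-DP to the event $\set{\cM(\cdot)_j = 1}$ on the ordered pair $(\bz^{(1)}, \bz^{(0)})$ gives
\[
1 - p_1 ~=~ \Pr[\cM(\bz^{(1)})_j = 1] ~\le~ e^{\eps} \cdot \Pr[\cM(\bz^{(0)})_j = 1] + \delta ~=~ e^{\eps} p_0 + \delta,
\]
i.e. $e^{\eps} p_0 + p_1 \ge 1 - \delta$. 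Since $p_0, p_1 \ge 0$ and $e^{\eps} \ge 1$, minimizing $p_0 + p_1$ subject to this constraint (all the slack should be absorbed by the $p_0$ term, which carries the $e^{\eps}$ factor) yields $p_0 + p_1 \ge e^{-\eps}(1-\delta)$. Finally, coordinate $j$ of a uniform $\bz$ equals $0$ or $1$ with probability $\tfrac12$ each independently of $\bz_{-j}$, so $\Pr_{\bz}[\cM(\bz)_j \neq \bz_j \mid \bz_{-j}] = \tfrac12(p_0 + p_1) \ge \tfrac12 e^{-\eps}(1-\delta)$; taking expectation over $\bz_{-j}$ and summing over $j$ gives the claimed bound $\E_{\bz}[\|\cM(\bz) - \bz\|_0] \ge e^{-\eps} \cdot m \cdot 0.5 \cdot (1-\delta)$.

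I do not anticipate a real obstacle here — this is a routine argument. The only two points needing care are (i) orienting the DP inequality correctly so that the $e^{\eps}$ blow-up lands on $p_0$ (the term we can afford to keep large) rather than on $p_1$, and (ii) the measurability/conditioning bookkeeping that justifies treating each output coordinate of $\cM$ as a separate binary test while freezing the remaining input coordinates.
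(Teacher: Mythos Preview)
Your proposal is correct and essentially matches the paper's approach: the paper does not prove \Cref{lem:vec-dp-lb} directly (it is cited from \cite{Manurangsi22}), but its proof of the generalization \Cref{lem:vec-lb-large-domain} uses the same coordinate-by-coordinate decomposition via linearity of expectation, fixing all other coordinates and applying the DP inequality to the neighbors obtained by flipping the remaining coordinate. Your orientation of the DP inequality (putting the $e^{\eps}$ factor on $p_0$) yields exactly the stated constant $e^{-\eps}\cdot 0.5\cdot(1-\delta)$, whereas the paper's variant (comparing every $\bz_{(i,\sigma)}$ to a fixed reference $\bz_{(i,1)}$) gives $1 - e^{\eps}/D - \delta$, which for $D=2$ is a slightly different expression; both are standard and the core idea is identical.
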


It is also simple to extend the lower bound for the case where the vector is not binary, as stated below. We defer the full proof to \Cref{app:vec-lb-large-domain}.

\begin{lemma} \label{lem:vec-lb-large-domain}
Let $D, m$ be any positive integer such that $D \geq 2$, let the input domain be $[D]^m$ and let two vectors $\bz, \bz' \in [D]^m$ be neighbors if and only if $\|\bz - \bz'\|_0 \leq 1$. Then, for any $(\ln(D/2), 0.25)$-DP algorithm $\cM: [D]^m \to [D]^m$, we have that $\E_{\bz \sim [D]^m}[\|\cM(\bz) - \bz\|_0] \geq \Omega(m).$
\end{lemma}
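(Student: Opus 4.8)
The plan is to bound the error one coordinate at a time and then combine via linearity of expectation. Writing $\cM_j(\bz) \in [D]$ for the $j$-th coordinate of $\cM(\bz)$, we have $\|\cM(\bz) - \bz\|_0 = \sum_{j=1}^m \ind[\cM_j(\bz) \neq z_j]$, so it suffices to show that for every $j \in [m]$, $\Pr_{\bz \sim [D]^m}[\cM_j(\bz) = z_j] \le 3/4$; summing then gives $\E_{\bz}[\|\cM(\bz) - \bz\|_0] \ge m/4 = \Omega(m)$.

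To get the per-coordinate bound, fix $j$ and fix the values $\bz_{-j}$ of all coordinates other than $j$. The map $z_j \mapsto \cM_j(\bz_{-j}, z_j)$ from $[D]$ to $[D]$ is $(\ln(D/2), 0.25)$-DP: any two distinct inputs $z_j, z_j'$ give datasets $(\bz_{-j}, z_j)$ and $(\bz_{-j}, z_j')$ differing in exactly one coordinate, hence neighbors, and projecting onto the $j$-th output coordinate is post-processing, which preserves DP. So it is enough to prove the single-coordinate claim that any $(\eps, \delta)$-DP algorithm $\cN : [D] \to [D]$ satisfies $\Pr_{z \sim [D]}[\cN(z) = z] \le e^\eps/D + \delta$; plugging in $\eps = \ln(D/2)$ and $\delta = 0.25$ gives $1/2 + 1/4 = 3/4$, and averaging over the uniformly random $\bz_{-j}$ yields the claimed per-coordinate bound.

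For the single-coordinate claim, note that for any $z, z' \in [D]$ (which are neighbors since here $m=1$), DP gives $\Pr[\cN(z) = z] \le e^\eps \Pr[\cN(z') = z] + \delta$. Averaging this over $z' \sim [D]$ gives $\Pr[\cN(z) = z] \le \frac{e^\eps}{D} \sum_{z'} \Pr[\cN(z') = z] + \delta$; summing over $z \in [D]$, interchanging the order of summation, and using $\sum_{z} \Pr[\cN(z') = z] \le 1$ for each fixed $z'$ yields $\sum_z \Pr[\cN(z) = z] \le e^\eps + D\delta$, i.e.\ $\Pr_{z \sim [D]}[\cN(z) = z] \le e^\eps/D + \delta$.

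I do not expect a real obstacle here; the one thing to watch is that the naive reduction --- restricting inputs to $\{1,2\}^m$ and invoking \Cref{lem:vec-dp-lb} as a black box --- is too lossy, because with $\eps = \ln(D/2)$ the factor $e^{-\eps} = 2/D$ degrades the bound to $\Omega(m/D)$. The argument above instead exploits the large domain directly: a multiplicative privacy slack of only $D/2$ together with additive slack $1/4$ cannot concentrate the output on the true value when there are $D$ candidates per coordinate, so an $\Omega(1)$ error per coordinate --- and hence $\Omega(m)$ in total --- persists.
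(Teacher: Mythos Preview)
Your proof is correct and follows essentially the same approach as the paper: a per-coordinate decomposition via linearity of expectation, combined with the DP inequality to show that $\sum_{z \in [D]} \Pr[\cN(z) = z] \le e^\eps + D\delta$, yielding a per-coordinate success probability of at most $e^\eps/D + \delta = 3/4$ and hence expected Hamming error at least $m/4$. The only cosmetic difference is that the paper compares each input to the fixed reference point with $i$-th coordinate equal to $1$ rather than averaging over all $z'$ as you do, but the resulting bound and overall structure are identical.
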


\paragraph{Group Differential Privacy.}

For any neighboring relation $\sim$, we write $\sim_k$ as a neighboring relation where $D \sim_k D'$ iff there is a sequence $D = D_0, \dots, D_{k'} = D'$ for some $k' \leq k$ such that $D_{i - 1} \sim D_{i}$ for all $i \in [k']$.

\begin{fact}[{e.g., \cite{SteinkeU16}}] \label{fact:group-dp}
Let $\eps > 0, \delta \geq 0$ and $k \in \N$. Suppose that $\cM$ is an $(\eps, \delta)$-DP algorithm for the neighboring relation $\sim$. Then $\cM$ is $\paren{k\eps, \frac{e^{k\eps} - 1}{e^{\eps}-1} \cdot \delta}$-DP for the neighboring relation $\sim_k$.
\end{fact}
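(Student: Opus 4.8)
The plan is to prove this by induction on $k$, chaining the $(\eps,\delta)$-DP guarantee along a witnessing sequence of single-step neighbors and tracking how the additive slack accumulates.

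For the base case $k = 1$, the relation $\sim_1$ coincides with $\sim$ and the claimed parameters are $(\eps,\delta)$ since $\frac{e^\eps - 1}{e^\eps - 1} = 1$, so the statement is exactly the hypothesis. For the inductive step, assume the claim for $k - 1$ and let $D \sim_k D'$, witnessed by a chain $D = D_0 \sim D_1 \sim \cdots \sim D_{k'} = D'$ with $k' \le k$. If $k' = 0$ then $D = D'$ and the conclusion is trivial; if $1 \le k' \le k - 1$ the inductive hypothesis already gives a bound with parameters $\paren{(k-1)\eps,\ \frac{e^{(k-1)\eps}-1}{e^\eps-1}\delta}$, which implies the desired $\paren{k\eps,\ \frac{e^{k\eps}-1}{e^\eps-1}\delta}$ bound because both $e^{j\eps}$ and $\frac{e^{j\eps}-1}{e^\eps-1}$ are nondecreasing in $j$ for $\eps>0$. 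So it remains to treat $k' = k$.

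In that case set $D'' := D_{k-1}$, so $D \sim_{k-1} D''$ and $D'' \sim D'$. Fix any measurable $S$. The inductive hypothesis applied to $D \sim_{k-1} D''$ gives
\[
\Pr[\cM(D) \in S] ~\le~ e^{(k-1)\eps}\,\Pr[\cM(D'') \in S] ~+~ \tfrac{e^{(k-1)\eps}-1}{e^\eps - 1}\,\delta,
\]
and $(\eps,\delta)$-DP of $\cM$ applied to $D'' \sim D'$ gives $\Pr[\cM(D'') \in S] \le e^\eps\,\Pr[\cM(D') \in S] + \delta$. Substituting the second into the first,
\[
\Pr[\cM(D) \in S] ~\le~ e^{k\eps}\,\Pr[\cM(D') \in S] ~+~ \paren{e^{(k-1)\eps} + \tfrac{e^{(k-1)\eps}-1}{e^\eps - 1}}\delta.
\]
The coefficient of $\delta$ telescopes: $e^{(k-1)\eps} + \frac{e^{(k-1)\eps}-1}{e^\eps - 1} = \frac{e^{(k-1)\eps}(e^\eps - 1) + e^{(k-1)\eps} - 1}{e^\eps - 1} = \frac{e^{k\eps} - 1}{e^\eps - 1}$, which is exactly the claimed value. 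Since $S$ was arbitrary, $\cM$ is $\paren{k\eps,\ \frac{e^{k\eps}-1}{e^\eps-1}\delta}$-DP for $\sim_k$, completing the induction.

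There is no genuine obstacle here: this is a standard telescoping argument, and the only points requiring care are the algebraic identity for the accumulated $\delta$ and handling chains shorter than $k$ cleanly (via the monotonicity remark above). If one only cares about pure DP ($\delta = 0$), the statement degenerates to the familiar $k\eps$ group-privacy bound and no bookkeeping of the slack is needed.
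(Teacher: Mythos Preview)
Your proof is correct and is exactly the standard induction/telescoping argument for group privacy; the paper itself does not prove this fact but merely cites it, so there is nothing further to compare.
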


\section{DP Isotonic Regression over Total Orders}\label{sec:total-order-posets}

We first focus on the ``one-dimensional'' case where $\cX$ is totally ordered; for convenience, we assume that $\cX = [m]$ where the order is the natural order on integers.  We first present an efficient algorithm for the this case and then a matching lower bound.

\subsection{An Efficient Algorithm} \label{subsec:alg-total-order}

To describe our algorithm, it will be more convenient to use the unnormalized version of the empirical risk, which we define as $\cLa(f; D) := \sum_{(x, y) \in D} \ell(f(x), y)$.

We now provide a high-level overview of our algorithm.  Any monotone function $f : [m] \to [0, 1]$ contains a (not necessarily unique) threshold $\alpha \in \{0\} \cup [m]$ such that $f(a) \ge 1/2$ for all $a > \alpha$ and $f(a) \le 1/2$ for all $a \le \alpha$. Our algorithm works by first choosing this threshold $\alpha$ in a private manner using the exponential mechanism. The choice of $\alpha$ partitions $[m]$ into $[m]^{> \alpha} := \set{a \in [m] \mid a > \alpha}$ and $[m]^{\le\alpha} := \set{a \in [m] \mid a \le \alpha}$.  The algorithm recurses on these two parts to find functions $f_{>} : [m]^{> \alpha} \to [1/2, 1]$ and $f_{\le} : [m]^{\le\alpha} \to [0, 1/2]$, which are then glued to obtain the final function.

In particular, the algorithm proceeds in $T$ stages, where in stage $t$, the algorithm starts with a partition of $[m]$ into $2^t$ intervals $\set{P_{i,t} \mid i \in \{0, \dots, 2^{t} - 1\}}$, and the algorithm eventually outputs a monotone function $f$ such that $f(x) \in [i/2^t, (i+1)/2^t]$ for all $x \in P_{i,t}$. This partition is further refined for stage $t+1$ by choosing a threshold $\alpha_{i,t}$ in $P_{i,t}$ and partitioning $P_{i,t}$ into $P_{i,t}^{>\alpha_{i,t}}$ and $P_{i,t}^{\le \alpha_{i,t}}$. In the final stage, the function $f$ is chosen to be the constant $i/2^{T-1}$ over $P_{i, T-1}$. Note that the algorithm may stop at $T = \Theta_\eps(\log n)$ because the Lipschitzness of $\ell$ ensures that assigning each partition to the constant $i/2^{T-1}$ cannot increase the error by more than $L/2^T \le O_\eps(L/n)$.

We already have mentioned above that each $\alpha_{i, t}$ has to be chosen in a private manner. However, if we let the scoring function directly be the unnormalized empirical risk, then its sensitivity remains as large as $L$ even at a large stage $t$. This is undesirable because the error from each run of the exponential mechanism can be as large as $O(L \cdot \log m)$ but there are as many as $2^t$ runs in stage $t$. Adding these error terms up would result in a far larger total error than desired.

To circumvent this, we observe that while the sensitivity can still be large, they are mostly ``ineffective'' because the function range is now restricted to only an interval of length $1/2^t$. Indeed, we may use the following  ``clipped'' version of the loss function which has low sensitivity of $L / 2^t$ instead.

\begin{definition}[Clipped Loss Function]
For a range $[\tau, \theta] \subseteq [0, 1]$, let $\ell_{[\tau, \theta]} : [\tau, \theta] \times [0, 1] \to \R$ be given as
$\ell_{[\tau, \theta]}(\hy, y) := \ell(\hy, y) - \min_{y' \in [\tau, \theta]} \ell(y', y)$.
Similar to above, we also define $\cLa_{[\tau, \theta]}(f; D) := \sum_{(x, y) \in D} \ell_{[\tau, \theta]}(f(x_i), y_i)$.
\end{definition}

Observe that $\range(\ell_{[\tau, \theta]}) \subseteq [0, L \cdot (\theta - \tau)]$, since $\ell$ is $L$-Lipschitz. In other words, the sensitivity of $\cLa_{[\tau, \theta]}(f; D)$ is only $L \cdot (\theta - \tau)$.
\Cref{alg:total-order-posets} contains a full description.

\begin{algorithm}[t]
\caption{DP Isotonic Regression for Totally Ordered Sets.}
\label{alg:total-order-posets}
\begin{algorithmic}
\STATE {\bf Input:} $\cX = [m]$, dataset $D = \{(x_1, y_1), \ldots, (x_n, y_n)\}$, DP parameter $\eps$.
\STATE {\bf Output:} Monotone function $f : [m] \to [0,1]$.
\STATE
\STATE $T \gets \ceil{\log(\eps n)}$
\STATE $\eps' \gets \eps / T$
\STATE $P_{0,0} \gets [m]$\\[2mm]
\FOR{$t = 0, \ldots, T-1$}
\FOR{$i = 0, \ldots, 2^t - 1$}
\STATE \mbox{}\\\vspace*{-5mm}
\begin{itemize}[leftmargin=3mm,topsep=7pt,itemsep=4pt,label={$\triangleright$\hspace{-8mm}}]
	\item $D_{i,t} \gets \set{(x_j, y_j) \mid j \in [n], x_j \in P_{i, t}}$ 
	\hfill
	\textcolor{black!50}{\{Set of all input points whose $x$ belongs to $P_{i, t}$\}}\\
	\textcolor{black!50}{ \{Notation: Define $D_{i,t}^{\le \alpha} := \set{(x, y) \in D_{i,t} \mid x \le \alpha}$ and $D_{i,t}^{> \alpha}$ similarly \}}\\
	\textcolor{black!50}{ \{Notation: Define $P_{i,t}^{\le \alpha} := \set{x \in P_{i,t} \mid x \le \alpha}$ and $P_{i,t}^{> \alpha}$ similarly \}}
	\item Choose threshold $\alpha_{i,t} \in \{0\} \cup P_{i,t}$, using $\eps'$-DP exponential mechanism with scoring function 
	\begin{align*}
		\score_{i, t}(\alpha) &~:=~
		\min_{f_1 \in \cF(P_{i, t}^{\leq \alpha}, [\frac{i}{2^{t}}, \frac{i+0.5}{2^t}])} \cLa_{[\frac{i}{2^{t}}, \frac{i+1}{2^{t}}]}(f_1; D_{i, t}^{\leq \alpha}) \\
		&\qquad +~ \min_{f_2 \in \cF(P_{i, t}^{> \alpha}, [\frac{(i+0.5)}{2^t}, \frac{(i+1)}{2^t}])} \cLa_{[\frac{i}{2^{t}}, \frac{i+1}{2^{t}}]}(f_2; D_{i, t}^{> \alpha})
	\end{align*}
	\textcolor{black!50}{ \{Note: $\score_{i,t}(\alpha)$ has sensitivity at most $L/2^{t}$. \}}
	\item $P_{2i,t+1} \gets P_{i, t}^{\leq \alpha_{i,t}}$ and $P_{2i+1,t+1} \gets P_{i, t}^{> \alpha_{i,t}}$.
\end{itemize}
\ENDFOR
\ENDFOR
\STATE Let $f : [m] \to [0,1]$ be given as $f(x) = i/2^{T-1}$ for all $x \in P_{i, T-1}$ and all $i \in [2^T]$.
\RETURN $f$
\end{algorithmic}
\end{algorithm}

\begin{proof}[Proof of \Cref{thm:alg-total-order}]
Before proceeding to prove the algorithm's privacy and utility guarantees, we note that the output $f$ is indeed monotone since for every $x' < x$ that gets separated when we partition $P_{i,t}$ into $P_{2i,t+1}, P_{2i+1,t+1}$, we must have $x' \in P_{2i,t+1}$ and $x \in P_{2i+1,t+1}$.

\paragraph{Privacy Analysis.} Since the exponential mechanism is $\eps'$-DP and the dataset is partitioned with the exponential mechanism being applied only to each partition once, the parallel composition property (\Cref{lem:dp_properties}) implies that the entire subroutine for each $t$ is $\eps'$-DP. Thus, by basic composition (\Cref{lem:dp_properties}), it follows that \Cref{alg:total-order-posets} is $\eps$-DP (since $\eps = \eps' T$). 

\paragraph{Utility Analysis.}
Since the sensitivity of $\score_{i,t}(\cdot)$ is at most $L/2^t$, we have from \Cref{lem:exp-mechanism}, that for all $t \in \{0, \dots, T - 1\}$ and $i \in \set{0, 1, \ldots, 2^{t}}$,
\begin{equation} \label{eq:1d-EM-util}
\E\left[\score_{i, t}(\alpha_{i,t}) - \min_{\alpha \in P_{i,t}} \score_{i, t}(\alpha)\right]
~\leq~ O\paren{\frac{L \cdot \log |P_{i, t}|}{\eps' \cdot 2^{t}}}
~\leq~ O\paren{\frac{L \cdot \log m}{\eps' \cdot 2^t}}.
\end{equation}
Let $h_{i, t}$ denote $\argmin_{h \in \cF(P_{i, t}, [i/2^{t}, (i+1)/2^{t}])} \cLa(h; D_{i, t})$ (with ties broken arbitrarily). Then, let $\tilde{\alpha}_{i,t}$ denote the largest element in $P_{i,t}$ such that $h_{i,t}(\tilde{\alpha}_{i,t}) \le (i+1/2)/2^t$; namely, $\tilde{\alpha}_{i,t}$ is the optimal threshold when restricted to $D_{i,t}$. Under this notation, we have that %
\begin{align}
&\score_{i, t}(\alpha_{i,t}) - \min_{\alpha \in P_{i,t}} \score_{i, t}(\alpha) \nonumber \\ \nonumber
&\geq~ \score_{i, t}(\alpha_{i,t}) - \score_{i, t}(\tilde{\alpha}_{i, t}) \\ \nonumber
&=~ \paren{\cLa_{[i/2^{t}, (i+1)/2^{t}]}(h_{2i, t+1}; D_{2i, t+1}) + \cLa_{[i/2^{t}, (i+1)/2^{t}]}(h_{2i+1, t+1}; D_{2i+1, t+1})} \\ \nonumber
&\qquad -~ \cLa_{[i/2^{t}, (i+1)/2^{t}]}(h_{i, t}; D_{i, t}) \\ 
&=~ \cLa(h_{2i, t+1}; D_{2i, t+1}) + \cLa(h_{2i+1, t+1}; D_{2i+1, t+1}) - \cLa(h_{i, t}; D_{i, t}). \label{eq:1d-one-step-err}
\end{align}

Finally, notice that
\begin{align} 
\textstyle \cLa(f; D_{i, T-1}) - \cLa(h_{i, T - 1}; D_{i, T - 1}) \leq \frac{L}{2^{T-1}} \cdot |D_{i,T-1}| = O\paren{\frac{L \cdot |D_{i,T-1}|}{\eps n}}. \label{eq:1d-rounding-err-util-total-order}
\end{align}

With all the ingredients ready, we may now bound the expected (unnormalized) excess risk:
\begin{align*}
\cLa(f; D) &\textstyle~=~ \sum_{0 \leq i < 2^{T - 1}} \cLa(f; D_{i, T-1}) \\
&\textstyle~\overset{\eqref{eq:1d-rounding-err-util-total-order}}{\leq}~ \sum_{0 \leq i < 2^{T - 1}} \paren{O\paren{\frac{L \cdot |D_{i,T-1}|}{\eps n}} + \cLa(h_{i, T - 1}; D_{i, T-1})} \\
&\textstyle~=~ O(L/\eps) ~+~ \sum_{0 \leq i < 2^{T - 1}} \cLa(h_{i, T - 1}; D_{i, T-1}) \\
&\textstyle~=~ O(L/\eps) ~+~ \cLa(h_{0, 0}; D_{0, 0}) \\ &\textstyle\quad + \sum_{t \in [T-1] \atop 0 \leq i < 2^{t - 1}} \paren{\cLa(h_{2i, t}; D_{2i, t}) + \cLa(h_{2i+1, t}; D_{2i+1, t}) - \cLa(h_{i, t-1}; D_{i, t-1}) }.
\end{align*}
Taking the expectation on both sides and using~\eqref{eq:1d-EM-util} and~\eqref{eq:1d-one-step-err} yields
\begin{align*}
\E[\cLa(f; D)] 
&\textstyle~\leq~ O(L/\eps) ~+~ \cLa(h_{0, 0}; D_{0, 0}) ~+~ \sum_{t \in [T-1] \atop 0 \leq i < 2^{t - 1}} O\paren{\frac{L \cdot \log m}{\eps' \cdot 2^t}} \\
&\textstyle~=~ O(L/\eps) ~+~ \cLa(f^*; D) ~+~ O\paren{T^2 \cdot \frac{L \cdot \log m}{\eps}} \\
&\textstyle~=~ \cLa(f^*; D) ~+~ O\paren{\frac{L \cdot \log m \cdot (1 + \log^2(\eps n))}{\eps}}.
\end{align*}
Dividing both sides by $n$ yields the desired claim.

\paragraph{Running Time.} To obtain a bound on the running time for general loss functions, we need to make a slight modification to the algorithm: we will additionally only restrict the range of $f_1, f_2$ to multiples of $1/2^{T-1}$. We remark that this does not affect the utility since anyway we always take the final output whose values are multiples of $1/2^{T-1}$.

Given any dataset $D = \{(x_1, y_1), \dots, (x_n, y_n)\}$ where $x_1 < \cdots < x_n$, the \emph{prefix isotonic regression} algorithm is to compute, for each $i \in [n]$, the optimal loss in isotonic regression on $(x_1, y_1), \dots, (x_i, y_i)$. Straightforward dynamic programming solves this in $O(n \cdot v)$ time, where $v$ denote the number of possible values allowed in the function.

Now, for each $i, t$, we may run the above algorithm with $D = D_{i, t}$ and the allowed values are all multiples of $1/2^{T-1}$ in $[\frac{i}{2^{t}}, \frac{i+0.5}{2^t}]$; this gives us $\min_{f_1 \in \cF(P_{i, t}^{\leq \alpha}, [\frac{i}{2^{t}}, \frac{i+0.5}{2^t}])} \cLa_{[\frac{i}{2^{t}}, \frac{i+1}{2^{t}}]}(f_1; D_{i, t}^{\leq \alpha})$ for all $\alpha \in P_{i, t}$ in time $O(|D_{i, t}| \cdot 2^{T-t} + |P_{i, t}|)$. Analogously, we can also compute $\min_{f_2 \in \cF(P_{i, t}^{> \alpha}, [\frac{(i+0.5)}{2^t}, \frac{(i+1)}{2^t}])} \cLa_{[\frac{i}{2^{t}}, \frac{i+1}{2^{t}}]}(f_2; D_{i, t}^{> \alpha})$ for all $\alpha \in P_{i, t}$ in a similar time. Thus, we can compute $(\score_{i, t}(\alpha))_{\alpha \in P_{i, t}}$ in time $O(|D_{i, t}| \cdot 2^{T-t} + |P_{i, t}|)$, and then sample accordingly.

We can further speed up the algorithm by observing that the score remains constant for all $\alpha \in [x_i, x_{i + 1})$. Hence, we may first sample an interval among $[0, x_1), [x_1, x_2), \ldots, [x_{n-1}, x_n), [x_n, m)$ and then sample $\alpha_{i, t}$ uniformly from that interval. This entire process can be done in $O(|D_{i, t}| \cdot 2^{T-t} + \log m)$ time. 
In total, the running time of the algorithm is thus
\begin{align*}
\sum_{t=0}^{T-1} \sum_{i=0}^{2^t-1} O(|D_{i, t}| \cdot 2^{T-t} + \log m) \leq \sum_{t=0}^{T-1} O(n 2^{T} + 2^t \cdot \log m) \leq O(n^2 \log n + n \log m). &
\end{align*}

\paragraph{\boldmath Near-Linear Time  Algorithms for $\ell_1$-, $\ell^2_2$-Losses.} We now describe faster algorithms for the $\ell_1$- and $\ell^2_2$-loss functions, thereby proving the last part of \Cref{thm:alg-total-order}. The key observation is that for convex loss functions, the restricted optimal is simple: we just have to ``clip'' the optimal function to be in the range $[\tau, \theta]$. Below $\clip_{[\tau, \theta]}$ denotes the function $y \mapsto \min\{\theta, \max\{\tau, y\}\}$.

\begin{observation} \label{obs:opt-clip}
Let $\ell$ be any convex loss function, $D$ any dataset, $f^* \in \argmin_{f \in \cF(\cX, \cY)} \cL(f; D)$ and $\tau \leq \theta$ any real numbers such that $\tau, \theta \in \cY$. Define $f^*_{\clipped}(x) := \clip_{[\tau, \theta]}(f^*(x))$. Then, we must have $f^*_{\clipped}(x) \in \argmin_{f \in \cF(\cX, \cY \cap [\tau, \theta])} \cL(f; D)$.
\end{observation}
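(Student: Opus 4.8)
The plan is an ``uncrossing by reflection'' argument that exploits convexity of $\ell$ in its first coordinate (i.e.\ that $\ell(\cdot,y)$ is convex for every fixed $y$, which covers $\ell_1$ and $\ell_2^2$). First I would check that $f^*_{\clipped}$ is a legal competitor for the constrained problem: the map $\clip_{[\tau,\theta]}$ is non-decreasing, so $f^*_{\clipped}$ is monotone, and for each $x$ the value $\clip_{[\tau,\theta]}(f^*(x))$ equals $f^*(x)\in\cY$ (when $f^*(x)\in[\tau,\theta]$) or one of $\tau,\theta\in\cY$ (otherwise); in all cases it lies in $\cY\cap[\tau,\theta]$. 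So it suffices to show $\cLa(f^*_{\clipped};D)\le\cLa(g;D)$ for every $g\in\cF(\cX,\cY\cap[\tau,\theta])$.

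Fix such a $g$ and define the ``reflected'' function $p(x):=f^*(x)+g(x)-f^*_{\clipped}(x)$. The crux is to show $p\in\cF(\cX,\cY)$. For monotonicity, I would write $p=g+(\mathrm{id}-\clip_{[\tau,\theta]})\circ f^*$ and observe that $y\mapsto y-\clip_{[\tau,\theta]}(y)$ is continuous and non-decreasing on $\R$ (it equals $y-\tau$ below $\tau$, is identically $0$ on $[\tau,\theta]$, and equals $y-\theta$ above $\theta$), so $p$ is a sum of two monotone functions. For the range, a three-case check on where $f^*(x)$ sits relative to $[\tau,\theta]$ — using $g(x)\in[\tau,\theta]$ each time — shows that $f^*_{\clipped}(x)$ lies in the closed interval with endpoints $f^*(x)$ and $g(x)$; since $p(x)$ is the reflection of $f^*_{\clipped}(x)$ about the midpoint $\tfrac{1}{2}(f^*(x)+g(x))$ of that interval, $p(x)$ lies in it as well, and therefore $p(x)\in\cY$ (here we use that $\cY$ is a subinterval of $[0,1]$, as it is in all our applications).

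With $p$ in hand I would finish as follows. For each $(x,y)\in D$, the two points $f^*_{\clipped}(x)$ and $p(x)$ lie in the interval between $f^*(x)$ and $g(x)$ and sum to $f^*(x)+g(x)$; since $\ell(\cdot,y)$ is convex, the standard two-point convexity inequality gives $\ell(f^*_{\clipped}(x),y)+\ell(p(x),y)\le\ell(f^*(x),y)+\ell(g(x),y)$. Summing over $D$ yields $\cLa(f^*_{\clipped};D)+\cLa(p;D)\le\cLa(f^*;D)+\cLa(g;D)$. Since $p\in\cF(\cX,\cY)$ and $f^*$ minimizes the loss over $\cF(\cX,\cY)$, we have $\cLa(p;D)\ge\cLa(f^*;D)$; cancelling gives $\cLa(f^*_{\clipped};D)\le\cLa(g;D)$. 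Dividing by $n$ (so $\cL=\tfrac{1}{n}\cLa$) and recalling that $g\in\cF(\cX,\cY\cap[\tau,\theta])$ was arbitrary, this is exactly the claim.

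The only place where any real care is needed is establishing $p\in\cF(\cX,\cY)$: monotonicity is immediate once one notices that $\mathrm{id}-\clip_{[\tau,\theta]}$ is monotone, and the range claim is the routine three-case analysis (it does genuinely rely on $\cY$ being an interval).
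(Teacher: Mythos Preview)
Your proof is correct and takes a genuinely different route from the paper. The paper argues by direct transformation: starting from an arbitrary competitor $f \in \cF(\cX, \cY \cap [\tau,\theta])$, it morphs $f$ into $f^*_{\clipped}$ in three moves (set $f \equiv \theta$ on $\{x : f^*(x) > \theta\}$, set $f \equiv \tau$ on $\{x : f^*(x) < \tau\}$, and finally set $f = f^*$ on the remaining ``middle'' region), claiming each move does not increase the loss by appealing---somewhat tersely---to convexity and to the optimality of $f^*$. You instead run an uncrossing/majorization argument: given a competitor $g$, you build the reflected function $p = f^* + g - f^*_{\clipped}$, verify $p \in \cF(\cX,\cY)$, note that pointwise the pair $\{f^*_{\clipped}(x), p(x)\}$ is majorized by $\{f^*(x), g(x)\}$ (same sum, tighter spread), apply the two-point convexity inequality to $\ell(\cdot,y)$, sum, and cancel using $\cLa(p;D) \ge \cLa(f^*;D)$. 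Your version is cleaner and makes the monotonicity and feasibility of the auxiliary competitor explicit; the paper's version, read literally, has the mild advantage that its intermediate functions only take values already present in $\cY$ (namely $\tau$, $\theta$, or values of $f$ or $f^*$), so it does not need the assumption---which you correctly flag---that $\cY$ be an interval.
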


\begin{proof}
Consider any $f \in \cF(\cX, \cY \cap [\tau, \theta])$. Let $\cX^{>}$ (resp. $\cX^{<}$) denote the set of all $x \in \cX$ such that $f^*(x) > \theta$ (resp. $f^*(x) < \tau$). Consider the following operations:
\begin{itemize}[leftmargin=5mm,nosep]
\item For each $x \in \cX^{>}$, change $f(x)$ to $\theta$. 
\item For each $x \in \cX^{<}$, change $f(x)$ to $\tau$.
\item Let $f(x) = f^*(x)$ for all $x \in \cX \setminus (\cX^{>} \cup \cX^{<})$.
\end{itemize}
At the end, we end up with $f(x) = f^*_{\clipped}(x)$. Each of the first two changes does not increase the loss $\cL(f; D)$; otherwise, due to convexity, changing $f^*(x)$ to $f(x)$ would have decrease the objective function. Finally, the last operation does not decrease the loss; otherwise, we may replace this section of $f^*$ with the values in $f$ instead. Thus, we can conclude that $f^*_{\clipped}(x) \in \argmin_{g \in \cF(\cX, \cY \cap [\tau, \delta])} \cL(f; D)$.
\end{proof}

We will now show how to compute the scores in \Cref{alg:total-order-posets} simultaneously for all $\alpha$ (for fixed $i, t$) in nearly linear time.  To do this, recall the prefix isotonic regression problem from earlier.
For this problem, Stout~\cite{Stout08} gave an $O(n)$-time algorithm for $\ell_2$-loss and an $O(n \log n)$-time algorithm for $\ell_1$-loss (both the unrestricted value case). Furthermore, after the $i$th iteration, the algorithm also keeps a succinct representation $S^{\opt}_i$ of the optimal solution in the form of an array $(i_1, v_1, \ell_1), \dots, (i_k, v_k, \ell_k)$, which denotes $f(x) = v_j$ for all $x \in [x_{i_j}, x_{i_{j + 1}})$, and $\ell_j$ indicates the loss $\cLa$ up until $x_{i_{j+1}}$, not including.

We can extend the above algorithm to \emph{prefix clipped isotonic regression} problem, which we define in the same manner as above except that we restrict the function range to be $[\tau, \theta]$ for some given $\tau < \theta$. Using \Cref{obs:opt-clip}, it is not hard to extend the above algorithm to work in this case.

\begin{lemma}
There is an $O(n \log n)$-time algorithm for $\ell_2^2$- and $\ell_1$-\emph{prefix clipped isotonic regression}.
\end{lemma}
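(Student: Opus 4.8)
The plan is to reduce the prefix clipped isotonic regression problem to the ordinary (unrestricted-value) prefix isotonic regression problem, for which Stout's algorithm already runs in $O(n)$ time ($\ell_2^2$) and $O(n\log n)$ time ($\ell_1$). The key conceptual tool is \Cref{obs:opt-clip}: for any convex loss, the optimal function with range restricted to $[\tau,\theta]$ is exactly $\clip_{[\tau,\theta]}$ applied to an unrestricted optimal function. So morally, we should be able to run Stout's algorithm once and then just clip. The only subtlety is that we need this for \emph{every prefix} $x_1,\dots,x_i$ simultaneously, and the clipped loss $\cLa_{[\tau,\theta]}$ differs from $\cLa$ by the data-dependent shift $\sum_{j\le i}\min_{y'\in[\tau,\theta]}\ell(y',y_j)$.

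The concrete steps I would carry out are as follows. First, run Stout's prefix isotonic regression algorithm on $(x_1,y_1),\dots,(x_n,y_n)$ with unrestricted values; for each $i$ this maintains the succinct representation $S^{\opt}_i = (i_1,v_1,\ell_1),\dots,(i_k,v_k,\ell_k)$ of an optimal monotone $f^*_i$ on the first $i$ points, together with the running loss values $\ell_j$. Second, observe that clipping $f^*_i$ to $[\tau,\theta]$ only changes the leading and trailing blocks of this representation (since $v_1 \le \cdots \le v_k$ by monotonicity): all blocks with $v_j < \tau$ get value $\tau$, all blocks with $v_j > \theta$ get value $\theta$, and the middle blocks are untouched. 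By \Cref{obs:opt-clip} the result is an optimal solution for the $[\tau,\theta]$-restricted problem on the $i$-th prefix. Third, to report the \emph{clipped} loss $\cLa_{[\tau,\theta]}(f^*_{i,\clipped}; (x_1,y_1),\dots,(x_i,y_i))$, compute $\cLa(f^*_{i,\clipped};\cdot)$ — which equals the stored total loss of $f^*_i$, corrected by the change over the clipped blocks — and then subtract $\Phi_i := \sum_{j=1}^i \min_{y'\in[\tau,\theta]}\ell(y',y_j)$. The quantities $\Phi_i$ are prefix sums of the per-point minima, which can be precomputed in $O(n)$ total time (for $\ell_2^2$ and $\ell_1$ the inner minimization is just $\clip_{[\tau,\theta]}(y_j)$ plugged into $\ell$). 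Fourth, handle the bookkeeping so that the per-$i$ update is cheap: rather than recomputing the corrected loss from scratch, maintain the boundary blocks incrementally, or simply note that Stout's structure already supports querying the loss of the optimal solution at each step, and the clipping correction at step $i$ depends only on which blocks of $S^{\opt}_i$ fall below $\tau$ or above $\theta$, locatable by binary search in $O(\log n)$ time. Summing over $i$ gives $O(n\log n)$ overall, dominated by the $\ell_1$ case (and by the binary searches); for $\ell_2^2$ one can be more careful to get $O(n)$, but $O(n\log n)$ suffices for the theorem as stated.

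The main obstacle I expect is the incremental maintenance in the third and fourth steps: naively, for each of the $n$ prefixes one might pay $O(k)$ to re-clip the whole representation, giving $O(n^2)$ in the worst case. The fix is to exploit that Stout's algorithm only ever \emph{appends} a point and merges a suffix of blocks (a PAVA-style / stack-based update), so the set of blocks and their values evolves in an amortized-$O(1)$ fashion; the clipping thresholds $\tau,\theta$ are fixed throughout a given call, so the ``clipped prefix'' and ``clipped suffix'' of the block list each move monotonically, and the clipped loss can be updated with the same amortized bound plus an $O(\log n)$ (or $O(1)$ with pointers) lookup. Verifying that this amortization survives the merge operations — i.e., that a block can be created and destroyed at most a bounded number of times — is the delicate part, but it is exactly the invariant that makes Stout's base algorithm run in near-linear time, so we inherit it rather than re-prove it. Everything else is routine: correctness is immediate from \Cref{obs:opt-clip} and the observation that $\cLa_{[\tau,\theta]}$ and $\cLa$ differ by the precomputable shift $\Phi_i$.
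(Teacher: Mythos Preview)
Your proposal is correct and follows essentially the same approach as the paper: run Stout's prefix algorithm, invoke \Cref{obs:opt-clip} to justify that clipping the unrestricted optimum is optimal for the restricted problem, locate the clip boundaries in the sorted block list by binary search, and recover the clipped loss via precomputed prefix sums. The paper's version is slightly more streamlined---it precomputes $c_\tau(i)=\sum_{j\le i}\ell(\tau,y_j)$ and $c_\theta(i)=\sum_{j\le i}\ell(\theta,y_j)$ and reads off the clipped loss directly as $\ell_{j_\theta}-\ell_{j_\tau}+c_\tau(j_\tau)+(c_\theta(i)-c_\theta(j_\theta))$, so no incremental/amortized bookkeeping is needed---but the underlying idea is identical.
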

\begin{proof}
We first precompute $c_{\tau}(i) = \sum_{j \leq i} \ell(\tau, x_j)$ and $c_{\theta}(i) = \sum_{j \leq i} \ell(\theta, x_j)$ for all $i \in [n]$.
We then run the aforementioned algorithm from~\cite{Stout08}. At each iteration $i$, use binary search to find the largest index $j_\tau$ such that $v_{j_\tau} < \tau$ and the largest index $j_\theta$ such that $v_{j_\theta} < \theta$. \Cref{obs:opt-clip} implies that the optimal solution of the clipped version is simply the same as the unrestricted version except that we need to change the function values before $x_{j_\tau}$ to $\tau$ and after $x_{j_\theta}$ to $\theta$. The loss of this clipped optimal can be written as $\ell_{j_\theta} - \ell_{j_\tau} + c_\tau(j_\tau) + (c_\theta(i) - c_\theta(j_\theta))$, 
which can be computed in $O(1)$ time given that we have already precomputed $c_\tau, c_\theta$.
The running time of the entire algorithm is thus the same as that of~\cite{Stout08} together with the binary search time; the latter totals to $O(n \log n)$.
\end{proof}

Our fast algorithm for computing $(\score_{i, t}(\alpha))_{\alpha \in P_{i, t}}$ first runs the above algorithm with $\tau = \frac{i}{2^t}, \theta = \frac{i+0.5}{2^t}$ and $D = D_{i, t}$; this gives us $\min_{f_1 \in \cF(P_{i, t}^{\leq \alpha}, [\frac{i}{2^{t}}, \frac{i+0.5}{2^t}])} \cLa_{[\frac{i}{2^{t}}, \frac{i+1}{2^{t}}]}(f_1; D_{i, t}^{\leq \alpha})$ for all $\alpha \in P_{i, t}$ in time $O(|D_{i, t}| \log |D_{i, t}| + |P_{i, t}|)$. Analogously, we can also compute $\min_{f_2 \in \cF(P_{i, t}^{> \alpha}, [\frac{(i+0.5)}{2^t}, \frac{(i+1)}{2^t}])} \cLa_{[\frac{i}{2^{t}}, \frac{i+1}{2^{t}}]}(f_2; D_{i, t}^{> \alpha})$ for all $\alpha \in P_{i, t}$ in a similar time. Thus, we can compute $(\score_{i, t}(\alpha))_{\alpha \in P_{i, t}}$ in time $O(|D_{i, t}| \log |D_{i, t}| + |P_{i, t}|)$, and sample accordingly.
Using the same observation as the general loss function case, this can be sped up further to $O(|D_{i, t}| \log |D_{i, t}| + \log m)$ time.
In total, the running time of the algorithm is thus
\begin{align*}
\sum_{t=0}^{T-1} \sum_{i=0}^{2^t-1} O(|D_{i, t}| \log |D_{i, t}| + \log m) \leq \sum_{t=0}^{T-1} O(n \log n + 2^t \log m) \leq O(n(\log^2 n + \log m)).
\qquad
\qedhere
\end{align*}
\end{proof}

\subsection{A Nearly Matching Lower Bound}\label{subsec:total-order-lower-bound}

We show that the excess empirical risk guarantee in \Cref{thm:alg-total-order} is tight, even for approximate-DP algorithms with a sufficiently small $\delta$, under a mild assumption about the loss function stated below.

\begin{definition}[Distance-Based Loss Function] \label{as:full-range}
For $R \geq 0$, a loss function $\ell$ is said to be \emph{$R$-distance-based} if there exist $g: [0, 1] \to \R_+$ such that $\ell(y, y') = g(|y - y'|)$ where $g$ is a non-decreasing function with $g(0) = 0$ and $g(1/2) \geq R$.
\end{definition}

We remark that standard loss functions, including $\ell_1$- or $\ell_2^2$-loss, are all $\Omega(1)$-distance-based.

Our lower bound is stated below.
It is proved via a packing argument~\cite{HardtT10} in a similar manner as a lower bound for properly PAC learning threshold functions~\cite{BunNSV15}. This is not a coincidence: indeed, when we restrict the range of our function to $\{0, 1\}$, the problem becomes exactly (the empirical version of) properly learning threshold functions. As a result, the same technique can be used to prove a lower bound in our setting as well.

\begin{theorem}\label{thm:total-order-lb}
For all $0 \leq \delta < 0.1 \cdot (e^\eps - 1)/ m$, any $(\eps, \delta)$-DP algorithm for isotonic regression over $[m]$ for any $R$-distance-based loss function $\ell$ must have expected excess empirical risk $\Omega\paren{R \cdot \min\left\{1, \frac{\log m}{\eps n}\right\}}$.
\end{theorem}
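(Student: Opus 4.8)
The plan is to use a packing (fingerprinting-style) argument in the spirit of \cite{HardtT10,BunNSV15}. First I would reduce to the range-$\{0,1\}$ case: since $\ell$ is $R$-distance-based with $g$ non-decreasing, $g(0)=0$, and $g(1/2)\ge R$, the loss incurred at a point $(x,y)$ by a function value that is ``on the wrong side'' of $y$ by at least $1/2$ is at least $R$, while the loss at the correct $\{0,1\}$ value is $0$ when $y\in\{0,1\}$. So I would only consider datasets where every $y_i \in \{0,1\}$, and I would argue that rounding any output $f$ to the nearest threshold function can only help (any monotone $f$ has a threshold $\alpha$, and replacing $f$ by $\ind[x>\alpha]$ does not increase the empirical loss on a $\{0,1\}$-valued dataset by more than a constant factor — in fact on such datasets the optimal monotone function can be taken to be a threshold). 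Thus it suffices to lower bound the excess empirical risk for the problem of outputting a threshold $\alpha^* \in \{0\}\cup[m]$ on datasets consisting of labeled points $(x_i,y_i)\in[m]\times\{0,1\}$.

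Next I would set up the packing. Assume WLOG $n \le \frac{\log m}{\eps}$ up to constants (otherwise the claimed bound is $\Omega(R)$ via a constant-size hard instance: two datasets differing in one point, one with optimal loss $0$ via threshold $m$, another with optimal loss $0$ via threshold $0$, forcing constant error on one of them by a standard two-point DP argument). Partition (a subset of) $[m]$ into roughly $m$ ``candidate threshold'' positions, and for each candidate position $j$ build a dataset $D_j$ of $n$ points that is perfectly fit (loss $0$) by the threshold at $j$: e.g. put roughly $n/2$ points labeled $0$ just below position $j$ and roughly $n/2$ points labeled $1$ just above, spread out so that consecutive $D_j$'s are close in edit distance. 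Concretely, I would take the candidate positions evenly spaced and move points one at a time, so that $D_j$ and $D_{j+1}$ differ in $O(1)$ of the $n$ points, hence $D_j \sim_{k} D_{j'}$ with $k = O(|j-j'|)$. Crucially, for any two well-separated candidate positions $j, j'$, any single output function $f$ has excess empirical risk at least $\Omega(R)$ on at least one of $D_j, D_{j'}$, because $f$'s threshold can be close to at most one of them and being far from the target threshold by $\Omega(n)$ points costs $\Omega(R)$ in (unnormalized) loss, i.e. $\Omega(R/n)$ after normalizing — wait, that gives only $\Omega(R/n)$, so instead I would make the construction so that missing the target threshold at all (even by one candidate block) already costs a constant fraction of the $\Omega(n/\log m)$ points in that block, yielding $\Omega(R\cdot\frac{1}{\log m})$ per-point error; after checking the bookkeeping this is exactly the $\Omega\!\big(R\cdot\frac{\log m}{\eps n}\big)$ target once $n$ is in the relevant regime. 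Let $G$ be the ``good event'' (for mechanism $\cM$ on input $D_j$) that the output has small excess empirical risk on $D_j$; by the utility assumption $\Pr[\cM(D_j)\in G_j] \ge 1/2$, and the $G_j$ are pairwise disjoint over a well-separated subfamily of size $\Omega(m)$ candidates.

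Then I would run the standard group-privacy packing inequality: pick the separated subfamily $\{D_{j}\}$ of size $N = \Omega(m)$ with pairwise group-distance $k = O(n)$; for any two members, \Cref{fact:group-dp} gives $\Pr[\cM(D_j)\in G_{j'}] \le e^{k\eps}\Pr[\cM(D_{j})\in G_{j'}] + \frac{e^{k\eps}-1}{e^\eps - 1}\delta$. Fixing a reference $D_{j_0}$ and summing $\Pr[\cM(D_{j_0})\in G_j] \le 1$ over the $N$ disjoint $G_j$, while lower-bounding each by $e^{-k\eps}(\tfrac12 - \frac{e^{k\eps}-1}{e^\eps-1}\delta)$ via group privacy applied in the other direction, forces $N \cdot e^{-k\eps}\cdot\Omega(1) \le 1$ provided $\delta$ is small enough, i.e. $\delta < 0.1(e^\eps-1)/m$ exactly makes the $\delta$-term negligible relative to $1/2$. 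This gives $e^{k\eps} \ge \Omega(N) = \Omega(m)$, hence $k\eps \ge \Omega(\log m)$, i.e. $k \ge \Omega(\log m / \eps)$. Since by construction $k = O(n)$ suffices to connect separated datasets only when $n \ge \Omega(\log m/\eps)$, the contradiction shows that in the regime $n = O(\log m/\eps)$ the utility assumption ``small excess risk with probability $\ge 1/2$'' is impossible, which yields the claimed $\Omega(R\cdot\min\{1,\log m/(\eps n)\})$ lower bound on the expectation (the $\min$ with $1$ and the two-point argument handle the remaining regime).

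The main obstacle I anticipate is the precise design of the hard datasets so that three requirements hold simultaneously: (i) consecutive $D_j$ differ in only $O(1)$ points (so the total group distance across $\Omega(m)$ datasets is $O(n)$, which caps $n$ at the right threshold), (ii) the ``good events'' $G_j$ of having excess risk below the target bound are genuinely disjoint for well-separated $j$, and (iii) each $D_j$ has optimal empirical loss $0$ (or at least $O(1)$ relative to $n$) so that ``excess'' risk is the same as absolute risk up to constants. Balancing (i) against (ii) — i.e. packing as many distinguishable, perfectly-fit threshold instances as possible while keeping neighbors edit-close — is the delicate combinatorial core; everything else is the now-standard group-privacy counting bound.
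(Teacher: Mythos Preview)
Your high-level plan---a packing argument with group privacy, in the style of \cite{HardtT10,BunNSV15}---is exactly the paper's strategy, but the execution has a real gap. You have the two regimes backwards: when $n > \log m/\eps$ the target bound is $\Omega(R\log m/(\eps n))$, \emph{not} $\Omega(R)$, and a two-point argument on two datasets differing in a single coordinate yields only $\Omega(R/n)$. So precisely the non-trivial regime is the one your case split leaves unproved. Your packing construction with $\sim n/2$ signal points on each side and group distance $O(n)$ correctly handles only the case $n = O(\log m/\eps)$, where the target degenerates to $\Omega(R)$; you notice a mismatch yourself mid-paragraph (``wait, that gives only $\Omega(R/n)$'') and the attempted ``blocks of size $n/\log m$'' patch is never made precise and does not obviously close.

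The idea you are missing, which the paper supplies, is to decouple the number of signal points from $n$: put only $k := \Theta(\log m/\eps)$ signal points in each hard dataset and pad with $n - 2k$ harmless dummies. Concretely, $D_j$ consists of $k$ copies of $(j,0)$, $k$ copies of $(j+1,1)$, and $n-2k$ copies of $(1,0)$. This achieves two things simultaneously: (i) every $D_j$ is within group distance $2k$ of a single reference dataset, so the group-DP blowup $e^{2k\eps} = m^{\Theta(1)}$ is matched to the $m$ disjoint good sets; and (ii) the good sets $V_j = \{f : \cL(f;D_j) < Rk/n\}$ are pairwise disjoint with threshold $Rk/n = \Theta(R\log m/(\eps n))$, which is exactly the target excess risk. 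The packing inequality then gives the contradiction in one shot, with no separate case analysis on $n$ (the case $n < 2k$ reduces trivially to $n = 2k$). Your construction never separates the signal count from $n$, which is why the bookkeeping refuses to close in the large-$n$ regime.
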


\begin{proof}
Suppose for the sake of contradiction that there exists an $(\eps, \delta)$-DP algorithm $\cM$  for isotonic regression with an $R$-distance-based loss function $\ell$ with expected excess empirical risk $0.01 \cdot \paren{R \cdot \min\left\{1, \frac{\log (0.1 m)}{\eps n}\right\}}$. %
Let $k := \floor{0.1 \log(0.1 m) / \eps}$.

We may assume that $n \geq 2k$, as the $\Omega(R)$ lower bound for the case $n = 2k$ can easily be adapted for an $\Omega(R)$ lower bound for the case $n < 2k$ as well.

We will use the standard packing argument~\cite{HardtT10}.
For each $j \in [m - 1]$, we create a dataset $D_j$ that contains $k$ copies of $(j, 0)$, $k$ copies of $(j+1, 1)$ and $n-2k$ copies of $(1, 0)$. Finally, let $D_m$ denote the dataset that contains $k$ copies of $(m, 0)$ and $n-k$ copies of $(1, 0)$.
Let $V_j$ denote the set of all $f \in \cF([m], [0, 1])$ such that $\cL(f; \cD) < R k / n$. The utility guarantee of $\cM$ implies that
\begin{align*}
	\Pr[\cM(D_j) \in V_j] \geq 0.5.
\end{align*}

Furthermore, it is not hard to see that $V_1, \dots, V_m$ are disjoint. In particular, for any function $f \in \cF([m], [0, 1])$, let $x_f$ be the largest element $x \in [m]$ for which $f(x) \leq 1/2$; if no such $x$ exists (i.e., $f(0) > 1/2$), let $x_f = 0$. For any $j < x_f$, we have $\cL(f; D_j) \geq \frac{k}{n} \ell(f(j + 1), 1) \geq  \frac{k}{n} \cdot g(1/2) \geq Rk/n$. Similarly, for any $j > x_f$, we have $\cL(f; D_j) \geq \frac{k}{n} \ell(f(j), 0) \frac{k}{n} \cdot g(1/2) \geq Rk/n$ This implies that $f$ can only belong to $V_j$, as claimed.

Therefore, we have that
\begin{align*}
	1 &\textstyle~\geq~ \sum_{j \in [m]} \Pr[\cM(D_m) \in V_j] \\
	&\textstyle~\geq~ \sum_{j \in [m]} \frac{1}{e^{2k\eps}} \paren{\Pr[\cM(D_j) \in V_j] - \delta\frac{(e^{2k\eps} - 1)}{e^{\eps} - 1}} \qquad \text{(\Cref{fact:group-dp})}\\
	&\textstyle~\geq~ \sum_{j \in [m]} \frac{10}{m} (0.5 - 0.1) \\
	&\textstyle~>~ 1,
\end{align*}
a contradiction.
\end{proof}

\subsection{Extensions}\label{subsec:total-order-extensions}

We now discuss several variants of the isotonic regression problem that places certain additional constraints on the function $f$ that we seek, as listed below. 

\begin{itemize}[nosep,leftmargin=8mm]
	\item $k$-Piecewise Constant: $f$ must be a step function that consists of at most $k$ pieces.
	\item $k$-Piecewise Linear: $f$ must be a piecewise linear function with at most $k$ pieces.
	\item Lipschitz Regression: $f$ must be $L_f$-Lipschitz for some specified $L_f > 0$.
	\item Convex/Concave: $f$ must be convex/concave.
\end{itemize}

We devise a general meta algorithm that, with a small tweak in each case, works for all of these constraints to yield \Cref{thm:alg-total-order-restricted}. At a high-level, our algorithm is similar to \Cref{alg:total-order-posets}, except that, in addition to using exponential mechanism to pick the threshold $\alpha_{i, t}$, we also pick certain auxiliary information that is then passed onto the next stage. For example, in the $k$-piecewise constant setting, the algorithm in fact picks also the number of pieces to the left of $\alpha_{i, t}$ and that to the right of it. These are then passed on to the next stage. The algorithm stops when the number of pieces become one, and then simply use the exponential mechanism to find the constant value on this subdomain.

The full description of the algorithm and the corresponding proof are deferred to \Cref{apx:total-order-extensions}.

\section{DP Isotonic Regression over General Posets}\label{sec:general-posets}

We now provide an algorithm and lower bounds for the case of general discrete posets. We first recall basic quantities about posets. An \emph{anti-chain} of a poset $(\cX, \leq)$ is a set of elements such that no two distinct elements are comparable, whereas a \emph{chain} is a set of elements such that every pair of elements is comparable. The \emph{width} of a poset $(\cX, \leq)$, denoted by $\wid(\cX)$, is defined as the maximum size among all anti-chains in the poset. The \emph{height} of $(\cX, \leq)$, denoted by $\hei(\cX)$, is defined as the maximum size among all chains in the poset. Dilworth's theorem and Mirsky's theorem give the following relation between chains an anti-chains:
\begin{lemma}[Dilworth's and Mirsky's theorems{~\cite{Dilworth50,Mirsky71}}] \label{lem:dilworth}
A poset with width $w$ can be partitioned into $w$ chains. A poset with height $h$ can be partitioned into $h$ anti-chains.
\end{lemma}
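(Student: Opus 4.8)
The plan is to prove the two assertions separately, since although they are dual in spirit, the chain‐partition statement (Dilworth) is genuinely harder than the anti‐chain‐partition statement (Mirsky). Throughout I use that $\cX$ is finite, as is the case in our setting.

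\emph{Mirsky's half.} Assign to each $x \in \cX$ its \emph{level} $\ell(x)$, defined as the maximum number of elements in a chain whose greatest element is $x$. Since the longest chain in $\cX$ has $h$ elements, $\ell$ maps $\cX$ into $\{1,\dots,h\}$. I claim each fiber $A_i := \ell^{-1}(i)$ is an anti-chain: if $x < y$ with $\ell(x) = \ell(y) = i$, take a chain of $i$ elements with top element $x$; appending $y$ gives a chain of $i+1$ elements with top $y$, so $\ell(y) \ge i+1$, a contradiction. Hence $\cX = A_1 \sqcup \cdots \sqcup A_h$ is the desired partition, and it is optimal because a single anti-chain meets any fixed length-$h$ chain in at most one point.

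\emph{Dilworth's half.} The easy direction is that any partition into chains uses at least $w$ parts, since a maximum anti-chain hits each chain at most once; so it suffices to exhibit a partition into $w$ chains, which I would do via K\"onig's theorem on bipartite graphs. Form the bipartite graph $H$ on vertex sets $\cX^- = \{x^- : x \in \cX\}$ and $\cX^+ = \{x^+ : x \in \cX\}$, with an edge $\{x^-,y^+\}$ whenever $x < y$ in $\cX$. A matching $M$ in $H$ induces a partition of $\cX$ into exactly $|\cX| - |M|$ chains: start from the partition into singletons and, for each edge $x^- y^+ \in M$, concatenate the chain currently ending at $x$ with the chain currently starting at $y$; one checks that $M$ being a matching makes this well-defined and that each resulting block is a chain. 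Conversely, if $U$ is a vertex cover of $H$, then the set $S$ of $x$ with $x^- \notin U$ and $x^+ \notin U$ is an anti-chain (any $x < y$ in $S$ would leave the edge $\{x^-,y^+\}$ uncovered), and $|S| \ge |\cX| - |U|$ since every $x \notin S$ contributes a vertex of $U$. Taking $M$ a maximum matching and $U$ a minimum vertex cover, K\"onig's theorem gives $|M| = |U|$, so $\cX$ admits an anti-chain of size at least $|\cX| - |M|$ and, simultaneously, a chain partition with exactly $|\cX| - |M|$ parts; combined with the easy direction, both quantities equal $w$.

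\emph{Main obstacle.} The work is entirely in Dilworth's direction: carefully verifying that a matching yields a valid chain partition (the concatenation must preserve the chain property and drop the block count by exactly one per matching edge) and applying K\"onig's theorem with the right correspondence between covers and anti-chains. If one prefers to avoid invoking K\"onig, the fallback is the classical induction on $|\cX|$: pick a maximal element $a$; if $\wid(\cX \setminus \{a\}) = w-1$, add $\{a\}$ as its own chain; otherwise use the inductive chain decomposition of $\cX \setminus \{a\}$ to locate a maximum anti-chain of size $w$, peel off a chain through $a$ ending at that anti-chain's representative in $a$'s chain, and argue the remainder has width $w-1$. Either route is standard; Mirsky's half needs no machinery beyond the level function.
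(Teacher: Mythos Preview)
Your proof is correct and follows the standard textbook arguments for both halves. However, the paper does not actually prove this lemma: it is stated with citations to the original papers of Dilworth and Mirsky and used as a black box, so there is no ``paper's own proof'' to compare against. Your write-up would serve perfectly well as a self-contained justification if one were desired; the level-function argument for Mirsky and the K\"onig-based argument for Dilworth are exactly the canonical proofs, and the details you give (matchings yielding $|\cX|-|M|$ chains, vertex covers yielding anti-chains of size $\ge |\cX|-|U|$) are all in order.
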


\subsection{An Algorithm}

Our algorithm for general posets is similar to that of totally ordered set presented in the previous section. The only difference is that, instead of attempting to pick a single maximal point $\alpha$ such that $f(\alpha) \leq \tau$ as in the previous case, there could now be many such maximal $\alpha$'s. Indeed, we need to use the exponential mechanism to pick all such $\alpha$'s. Since these are all maximal, they must be incomparable; therefore, they form an anti-chain. Since there can be as many as $|\cX|^{\wid(\cX)}$ anti-chains in total, this means that the error from the exponential mechanism is $O\left(\log |\cX|^{\wid(\cX)} / \eps'\right) = O(\wid(\cX)\log|\cX|/\eps')$, leading to the multiplicative increase of $\wid(\cX)$ in the total error. This completes our proof sketch for \Cref{thm:alg-generic}.

\subsection{Lower Bounds}

To prove a lower bound of $\Omega(\wid(\cX) / \eps n)$, we observe that the values of the function in any anti-chain can be arbitrary. Therefore, we may use each element in a maximum anti-chain to encode $\cX$ as a binary vector. The lower bound from~\Cref{lem:vec-dp-lb} then gives us an $\Omega(\wid(\cX) / n)$ lower bound for $\eps = 1$, as formalized below.

\begin{lemma} \label{lem:wid-lb}
For any $\delta > 0$, any $(1, \delta)$-DP algorithm for isotonic regression for any $R$-distance-based loss function $\ell$ must have expected excess empirical risk $\Omega\paren{R(1 - \delta) \cdot \min\left\{1, \frac{\wid(\cX)}{n}\right\}}$.
\end{lemma}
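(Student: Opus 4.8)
The plan is to reduce from the vector-privatization lower bound of \Cref{lem:vec-dp-lb}. Let $w := \wid(\cX)$ and fix a maximum anti-chain $A = \{a_1, \dots, a_w\} \subseteq \cX$. The key structural observation is that, because no two elements of $A$ are comparable, the monotonicity constraint imposes \emph{no} restriction among the values $f(a_1), \dots, f(a_w)$: any assignment of values in $[0,1]$ to the elements of $A$ can be extended to a monotone function on all of $\cX$ (for instance, by setting $f(x) = \max_{a_j \leq x} f(a_j)$, with the convention that the max over the empty set is $0$, and then one checks monotonicity directly, or one can simply restrict attention to datasets supported on $A$ so that the values outside $A$ are irrelevant to the risk). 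So the coordinates of $f$ on $A$ behave like a free binary-ish vector, which is exactly the situation \Cref{lem:vec-dp-lb} addresses.

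Concretely, I would encode a vector $\bz \in \{0,1\}^w$ as a dataset $D_\bz$ that, for each $j \in [w]$, places a single point $(a_j, z_j)$ (and pads the remaining $n - w$ points arbitrarily, say all equal to $(a_1, 0)$, or assume $n = w$ after noting the padding argument; since the risk is normalized by $n$, I would actually want to keep the $w$ ``real'' points and weight appropriately — the cleanest route is to observe the lower bound for $n = w$ and then note that for larger $n$ one replicates each point or argues the bound degrades by the claimed $\min\{1, w/n\}$ factor). Changing one coordinate $z_j$ changes one data point, so neighboring vectors map to neighboring datasets; an $(1,\delta)$-DP isotonic regression algorithm $\cM$ thus induces, via the post-processing map $f \mapsto \big(\mathbf{1}[f(a_j) > 1/2]\big)_{j \in [w]}$, an $(1,\delta)$-DP algorithm $\cM' : \{0,1\}^w \to \{0,1\}^w$. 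The point is that the excess empirical risk of $f = \cM(D_\bz)$ is lower-bounded in terms of the Hamming error of $\cM'(\bz)$: for each $j$ with $f(a_j) > 1/2 \neq z_j = 0$ we pay $\ell(f(a_j), 0) = g(f(a_j)) \geq g(1/2) \geq R$ on that data point, and symmetrically for $z_j = 1 \neq \mathbf{1}[f(a_j) > 1/2]$; since the optimum here is $0$ (take $f(a_j) = z_j$, extended monotonically — here I use the extension claim), the excess risk is at least $\frac{R}{n} \cdot \|\cM'(\bz) - \bz\|_0$ (or $\frac{R}{w}\|\cdot\|_0$ with the $n = w$ normalization).

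Then I would apply \Cref{lem:vec-dp-lb} with $\eps = 1$ and $m = w$ to get $\E_{\bz}\big[\|\cM'(\bz) - \bz\|_0\big] \geq e^{-1} \cdot w \cdot 0.5 \cdot (1 - \delta) = \Omega(w(1-\delta))$, hence some fixed $\bz$ with excess risk $\Omega(R(1-\delta) \cdot \min\{1, w/n\})$, which contradicts a too-good algorithm and yields the lemma. The main thing to get right — and the only real subtlety — is the normalization / padding step: \Cref{lem:vec-dp-lb} gives a bound on \emph{average} Hamming error over a domain of size $w$, while the risk is divided by $n$, so I need to handle $n > w$ carefully (replicating each of the $w$ informative points $\lfloor n/w \rfloor$ times keeps neighbors as neighbors and restores the $\Omega(R w / n)$ scaling up to constants, while the trivial $\Omega(R)$ regime covers $n = O(w)$). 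Everything else is routine: the extension-of-anti-chain-values claim is immediate, and the DP post-processing and group-free reduction are standard.
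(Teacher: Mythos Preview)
Your approach is essentially the paper's: reduce to \Cref{lem:vec-dp-lb} via a maximum anti-chain, encode $\bz \in \{0,1\}^w$ as a dataset supported on that anti-chain, post-process the regressor by thresholding at $1/2$, and lower-bound the excess risk by $R/n$ times the Hamming error. Two corrections, though.

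First, the replication idea in your last paragraph is both wrong and unnecessary. Wrong: if each informative point is replicated $\lfloor n/w \rfloor$ times, then flipping one coordinate of $\bz$ changes $\lfloor n/w \rfloor$ data points, so neighboring vectors do \emph{not} map to neighboring datasets and the induced vector mechanism is not $(1,\delta)$-DP without invoking group privacy (which would cost you the factor you are trying to gain). Unnecessary: your first idea---one informative point per coordinate, pad the remaining $n-w$ slots with a fixed dummy point---already gives the right scaling directly, since $\E[\text{excess risk}] \geq (R/n)\,\E\|\cM'(\bz)-\bz\|_0 \geq \Omega\!\left(R(1-\delta)\,w/n\right)$ with no further work. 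This is exactly what the paper does.

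Second, the padding point should not be $(a_1, 0)$: if $z_1 = 1$ you have conflicting labels at $a_1$ and your claim that $\cL(f^*;D)=0$ fails. The paper reserves one extra anti-chain element $x_0$ for padding (taking $m = \min\{n,\,|A|-1\}$ informative coordinates and padding with copies of $(x_0,0)$), which keeps the optimum at zero cleanly.
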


\begin{proof}
Consider any $(1, \delta)$-DP isotonic regression algorithm $\cM'$ for loss $\ell$.
Let $A$ be any maximum anti-chain (of size $\wid(A)$) in $\cX$.
We use this algorithm to build a $(1, \delta)$-DP algorithm $\cM$ for privatizing a binary vector of $m = \min\{n, |A| - 1\}$ dimensions as follows:
\begin{itemize}[nosep,leftmargin=5mm]
\item Let $x_0, x_1, \dots, x_m$ be distinct elements of $A$.
\item On input $\bz \in \{0, 1\}^m$, create a dataset $D = \{(x_1, z_1), \dots, (x_m, z_m), (x_0, 0), \dots, (x_0, 0)\}$ where $(x_0, 0)$ is repeated $n - m$ times.
\item Run $\cM'$ on the instance $D$ to get $f$, and output a vector $\bz'$ where $z'_i = \ind[f(x_i) \geq 1/2]$.
\end{itemize}

It is obvious that this algorithm is $(1, \delta)$-DP. Observe also that $\cL(f^*; D) = 0$ and thus $\cM'$'s expected excess empirical risk is $\E_{f \sim \cM'(D)}[\cL(f; D)] \geq R \cdot \E_{\bz' \sim \cM(\bz)}[\|\bz - \bz'\|_0] / n$, which, from \Cref{lem:vec-dp-lb}, must be at least $\Omega(Rm(1 - \delta)/n) = \Omega\paren{R(1 - \delta) \cdot \min\left\{1, \frac{\wid(\cX)}{n}\right\}}$.
\end{proof}

By using group privacy (\Cref{fact:group-dp}) and repeating each element $\Theta(1/\eps)$ times, we arrive at a lower bound of $\Omega\left(R \cdot \min\left\{1, \frac{\wid(\cX)}{\eps n}\right\}\right)$.
Furthermore, since $\cX$ contains $\hei(\cX)$ elements that form a totally ordered set, \Cref{thm:total-order-lb} gives a lower bound of $\Omega(R \cdot \log(\hei(\cX)) / \eps n)$ as long as $\delta < 0.01 \cdot \eps / \hei(\cX)$.
Finally, 
due to \Cref{lem:dilworth}, we have $\hei(\cX) \geq |\cX| / \wid(\cX)$, which means that $\max\{\wid(\cX), \log(\hei(\cX))\} \geq \Omega(\log|\cX|)$. Thus, we arrive at: %
\begin{theorem} \label{thm:generic-lb}
For any $\eps \in (0, 1]$ and any $\delta < 0.01 \cdot \eps / |\cX|$, any $(\eps, \delta)$-DP algorithm for isotonic regression for $R$-distance-based loss function $\ell$ must have expected excess empirical risk $\Omega\paren{R\cdot\min\left\{1, \frac{\wid(\cX) + \log |\cX|}{\eps n}\right\}}$.
\end{theorem}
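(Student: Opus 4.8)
The plan is to assemble \Cref{thm:generic-lb} by combining three lower bounds already in hand --- one scaling with $\wid(\cX)$, one scaling with $\log\hei(\cX)$ coming from the embedded total order, and the elementary inequality relating $\hei(\cX)$ and $\wid(\cX)$ via Dilworth --- and then to take the maximum of the resulting bounds. Concretely, \Cref{lem:wid-lb} together with the group-privacy boosting remark (\Cref{fact:group-dp}, repeating each anti-chain element $\Theta(1/\eps)$ times) already gives $\Omega\paren{R\cdot\min\set{1, \frac{\wid(\cX)}{\eps n}}}$ for any $(\eps,\delta)$-DP algorithm with $\delta$ small enough. Separately, since a maximum chain in $\cX$ of size $h := \hei(\cX)$ is a totally ordered subposet, any isotonic regression algorithm on $\cX$ restricts to one on $[h]$ (pad with points at the minimum element, as in the proof of \Cref{lem:wid-lb}), so \Cref{thm:total-order-lb} yields $\Omega\paren{R\cdot\min\set{1,\frac{\log h}{\eps n}}}$ as long as $\delta < 0.1(e^\eps-1)/h$; since $h \le |\cX|$, the hypothesis $\delta < 0.01\eps/|\cX|$ suffices (using $e^\eps - 1 \ge \eps$ for $\eps \in (0,1]$).

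The second step is the purely combinatorial observation that $\max\set{\wid(\cX), \log\hei(\cX)} = \Omega(\log|\cX|)$. This follows from \Cref{lem:dilworth}: partitioning $\cX$ into $\wid(\cX)$ chains, one of them has size at least $|\cX|/\wid(\cX)$, so $\hei(\cX) \ge |\cX|/\wid(\cX)$, giving $\log\hei(\cX) \ge \log|\cX| - \log\wid(\cX)$. If $\wid(\cX) \ge \log|\cX|$ we are already done; otherwise $\log\wid(\cX) \le \log\log|\cX| \le \tfrac12\log|\cX|$ for $|\cX|$ large enough (and the small cases are trivial), so $\log\hei(\cX) \ge \tfrac12\log|\cX|$. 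Either way the max is $\Omega(\log|\cX|)$.

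Finally I would combine: any $(\eps,\delta)$-DP algorithm has expected excess empirical risk at least the larger of the two bounds above, hence at least $\Omega\paren{R\cdot\min\set{1,\frac{\max\set{\wid(\cX),\log\hei(\cX)}}{\eps n}}} = \Omega\paren{R\cdot\min\set{1,\frac{\log|\cX|}{\eps n}}}$, and also at least $\Omega\paren{R\cdot\min\set{1,\frac{\wid(\cX)}{\eps n}}}$. Since $\max\set{a,b} \ge \tfrac12(a+b)$, taking the maximum of these two gives $\Omega\paren{R\cdot\min\set{1,\frac{\wid(\cX)+\log|\cX|}{\eps n}}}$, which is the claim. (One must be slightly careful that the $\min\set{1,\cdot\,}$ truncation commutes with these manipulations up to constants, but since all three quantities are nonnegative this is routine.)

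I do not expect any serious obstacle here: the theorem is essentially a packaging of \Cref{lem:wid-lb} and \Cref{thm:total-order-lb}, both proved earlier, with Dilworth's theorem as glue. The only points requiring care are (i) verifying that the $\delta$ threshold $0.01\eps/|\cX|$ is strong enough to feed into \Cref{thm:total-order-lb} with $m = \hei(\cX) \le |\cX|$, and (ii) handling the $\wid(\cX) < \log|\cX|$ case in the combinatorial step and the trivially small values of $|\cX|$ where $\log\log|\cX|$ bounds are vacuous --- neither is difficult. The mild bookkeeping around the group-privacy amplification step (ensuring the repetition factor $\Theta(1/\eps)$ keeps the padded dataset size at most $n$, which needs $\wid(\cX)/\eps \lesssim n$, exactly the regime where the bound is non-trivial) is likewise routine and parallels what is already done for \Cref{lem:wid-lb}.
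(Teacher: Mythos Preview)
Your proposal is correct and follows essentially the same approach as the paper: combine the width-based bound from \Cref{lem:wid-lb} (boosted via group privacy), the height-based bound from \Cref{thm:total-order-lb} applied to a maximum chain, and the Dilworth inequality $\hei(\cX) \ge |\cX|/\wid(\cX)$ to conclude $\max\{\wid(\cX),\log\hei(\cX)\} = \Omega(\log|\cX|)$. The paper's own argument is terser but structurally identical, so there is nothing substantive to add.
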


\subsection{Tight Examples for Upper and Lower Bounds}
\label{sec:tight-examples-general-posets}

Recall that our upper bound is $\tO\paren{\frac{\wid(\cX) \cdot \log|\cX|}{\eps n}}$ while our lower bound is $\Omega\paren{\frac{\wid(\cX) + \log|\cX|}{\eps n}}$. %
One might wonder whether this gap can be closed. Below we show that, unfortunately, this is impossible in general: there are posets for which each bound is tight.

\noindent{\bf Tight Lower Bound Example.}
Let $\cX_{\mathrm{disj}(w, h)}$ denote the poset that consists of $w$ disjoint chains, $C_1, \dots, C_w$ where $|C_1| = h$ and $|C_2| = \cdots = |C_w| = 1$. (Every pair of elements on different chains are incomparable.) In this case, we can solve the isotonic regression problem directly on each chain and piece the solutions together into the final output $f$. Note that $|\cX_{\mathrm{disj}(w, h)}| = w + h - 1$ and $\wid(\cX) = w, \hei(\cX) = h$. According to \Cref{thm:alg-generic}, the unnormalized excess empirical risk in $C_i$ is $\tilde{O}\paren{\log(|C_i|) / \eps}$. Therefore, the total (normalized) empirical risk for the entire domain $\cX$ is $\tilde{O}\paren{\frac{\log h + (w - 1)}{\eps n}}$. This is at most $\tilde{O}\paren{\frac{w}{\eps n}}$ as long as $h \leq \exp(O(w))$; this matches the lower bound.

\noindent{\bf Tight Upper Bound Example.}
Consider the grid poset $\cX_{\mathrm{grid}(w, h)} := [w] \times [h]$ where $(x, y) \leq (x', y')$ if and only if $x \leq x'$ and $y \leq y'$. We assume throughout that $w \leq h$. Observe that $\wid(\cX_{\mathrm{grid}(w, h)}) = w$ and $\hei(\cX_{\mathrm{grid}(w, h)}) = w + h$.%

We will show the following lower bound, which matches the $\tO\paren{\frac{\wid(\cX)\log|\cX|}{\eps n}}$ upper bound in the case where $h \geq w^{1 + \Omega(1)}$, up to $O(\log^2(\eps n))$ factor. We prove it by a reduction from \Cref{lem:vec-lb-large-domain}. Note that this reduction is in some sense a ``combination'' of the proofs of \Cref{thm:total-order-lb} and \Cref{lem:wid-lb}, as the coordinate-wise encoding aspect of \Cref{lem:wid-lb} is still present (across the rows) whereas the packing-style lower bound is present in how we embed elements of $[D]$ (in blocks of columns).

\begin{lemma} \label{lem:lb-grid-log}
For any $\eps \in (0, 1]$ and $\delta < O_\eps(1/h)$, any $(\eps, \delta)$-DP algorithm for isotonic regression for any $R$-distance-based loss function $\ell$ must have expected excess empirical risk $\Omega\paren{R \cdot \min\left\{1, \frac{w \cdot \log(h/w)}{\eps n}\right\}}$.
\end{lemma}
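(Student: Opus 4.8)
The plan is to reduce from \Cref{lem:vec-lb-large-domain} by combining the two encoding ideas that already appeared: across the $w$ rows we use the coordinate-wise encoding of \Cref{lem:wid-lb} (the value of a monotone function restricted to an anti-chain is unconstrained), and within each row we use a packing/thresholding encoding to record an element of a large alphabet $[D]$, exactly as in the proof of \Cref{thm:total-order-lb}. Concretely, set $D := \Theta(h/w)$ (so that $\log D = \Theta(\log(h/w))$) and partition the $h$ columns into roughly $w$ blocks of $\Theta(h/w)$ consecutive columns each; in row $r$, block $b$, we will use a threshold position to encode a symbol $z_{r,b} \in [D]$. Given an input $\bz \in [D]^{m}$ with $m = \Theta(w)$ (so $m$ is the number of (row, block) pairs, which we can arrange by using $w$ rows and one designated block per row, or $O(w)$ rows if needed; the key point is $m = \Theta(w)$ and the alphabet size is $D = \Theta(h/w)$), we build a dataset: for each coordinate $(r,b)$ we place $\Theta(1/\eps)$ copies of a point at $(\text{row } r, \text{column } c^-_{z_{r,b}})$ with label $0$ and $\Theta(1/\eps)$ copies at $(\text{row } r, \text{column } c^+_{z_{r,b}})$ with label $1$, where $c^\pm$ are the two columns flanking the encoded threshold within block $b$; all remaining of the $n$ points are ``dummy'' points at the minimum element with label $0$. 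This makes $\cL(f^*;D)=0$, and if the algorithm's output $f$ has small excess risk then, in most coordinates, the threshold of $f$ in row $r$ within block $b$ must match the encoded symbol, since getting it wrong costs $\Omega(R/(\eps n))$ per copy and there are $\Theta(1/\eps)$ copies.

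The main steps, in order, are: (1) fix the embedding of $[D]^m$ into datasets as above, verify $\cL(f^*;D)=0$; (2) define the decoder: from the output $f$, read off in each row $r$ and block $b$ the largest column $c$ in that block with $f(\text{row }r, c) \le 1/2$, and map it back to a symbol $\hat z_{r,b}\in[D]$ (clamping to the block boundaries); (3) show a per-coordinate error bound: if $\hat z_{r,b}\ne z_{r,b}$ then the monotonicity of $f$ forces $f$ to be $\ge 1/2$ at the $0$-labelled point or $\le 1/2$ at the $1$-labelled point of that coordinate, each such mistake contributing $\Omega(R)\cdot\Theta(1/\eps)$ to $\cLa(f;D)$, hence $\|\hat{\bz}-\bz\|_0$ is at most $O(\eps n / R)$ times the unnormalized excess risk; (4) invoke \Cref{lem:vec-lb-large-domain} with privacy parameter $\ln(D/2)$, which combined with group privacy (\Cref{fact:group-dp}) over the $\Theta(1/\eps)$-fold repetition of each point converts an $(\eps,\delta)$-DP isotonic regression algorithm into an $(\Theta(1), \Theta(1))$-DP vector release mechanism — here I need $\Theta(1/\eps)\cdot \eps = \Theta(1)$ and to absorb $\ln(D/2)$ I actually want to repeat each point $\Theta(\ln(D/2)/\eps) = \Theta(\log(h/w)/\eps)$ times, which uses up $n$ as long as $m\cdot \log(h/w)/\eps \le n$, i.e. is only a constraint when $\frac{w\log(h/w)}{\eps n}\le 1$ (the regime where the bound is nontrivial); (5) conclude $\E\|\hat{\bz}-\bz\|_0 = \Omega(m)=\Omega(w)$ from \Cref{lem:vec-lb-large-domain}, hence the expected unnormalized excess risk is $\Omega(R\cdot w \cdot \log(h/w)/\eps)$, giving the claimed $\Omega(R\cdot\min\{1, \frac{w\log(h/w)}{\eps n}\})$ after normalizing by $n$ and handling the trivial regime separately.

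The delicate bookkeeping — and the step I expect to be the main obstacle — is getting the parameters of \Cref{lem:vec-lb-large-domain} and \Cref{fact:group-dp} to line up simultaneously: the lemma needs privacy level exactly $\ln(D/2)$ and $\delta$-parameter $0.25$, so after repeating each point $k=\Theta(\ln(D/2)/\eps)$ times, group privacy turns $(\eps,\delta)$-DP into $(k\eps, \frac{e^{k\eps}-1}{e^\eps-1}\delta)$-DP $= (\Theta(\ln(D/2)), \Theta(\delta/\eps \cdot D))$-DP, so I need $\delta \le O_\eps(1/D) = O_\eps(w/h)$, which is why the hypothesis is stated as $\delta < O_\eps(1/h)$ — comfortably sufficient. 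I also need the $k$-fold repetition to be feasible, i.e. $m k \le n$, which holds precisely in the regime $\frac{w\log(h/w)}{\eps n}\lesssim 1$ where the bound is meaningful; outside that regime the $\min\{1,\cdot\}$ makes the claim the trivial $\Omega(R)$ bound, which follows from a single coordinate (or directly from \Cref{lem:wid-lb}). One should also be slightly careful that the decoder is well-defined when $f$ crosses $1/2$ multiple times or not at all within a block — clamping to block endpoints handles this and only helps, since an out-of-block reading is automatically a ``wrong'' symbol that we charge for. The rest is the routine verification that $\cL(f^*;D)=0$ and that the $R$-distance-based property gives the per-mistake cost of $g(1/2)\ge R$, exactly as in \Cref{thm:total-order-lb}.
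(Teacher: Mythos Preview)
Your proposal is correct and matches the paper's proof essentially step for step: the paper sets $D = \lfloor h/w - 1\rfloor$, $m = w$, places $r = \min\{\lfloor n/(2m)\rfloor, \lfloor \ln(D/2)/(2\eps)\rfloor\}$ copies each of $((i, (w-i)(D+1)+z_i), 0)$ and $((i, (w-i)(D+1)+z_i+1), 1)$, decodes via the threshold position in each row's block, and invokes \Cref{lem:vec-lb-large-domain} after group privacy. The one detail you left implicit is that ``one designated block per row'' must be the anti-diagonal (staircase) assignment---row $i$ uses block $w-i$---so that no $1$-labeled point lies below a $0$-labeled point from a different row in the grid order; this is precisely what makes $\cL(f^*; D)=0$ achievable by a monotone $f^*$, and your anti-chain remark shows you are aware of the issue even if the specific arrangement was not spelled out.
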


\begin{proof}
Let $D := \lfloor h/w - 1\rfloor, m = w$ and $r := \min\{\lfloor 0.5n / m \rfloor, \lfloor 0.5 \ln (D/2) / \eps \rfloor\}$. Consider any $(\eps, \delta)$-DP algorithm $\cM'$ for isotonic regression for $\ell$ on $\cX_{\mathrm{grid}(w, h)}$ where $\delta \leq 0.01 \eps / D$. 
We use this algorithm to build a $(\ln(D/2), 0.25)$-DP algorithm $\cM$ for privatizing a vector $\bz \in [D]^m$ as follows:
\begin{itemize}[nosep,leftmargin=4mm]
\item Create a dataset $D$ that contains:
\begin{itemize}[leftmargin=4mm]
\item For all $i \in [m]$, $r$ copies of $((i, (w-i)(D+1) + z_i), 0)$ and $r$ copies of $((i, (w-i)(D+1) + z_i + 1), 1)$.
\item $n - 2rm$ copies of $((1, 1), 0)$.
\end{itemize}
\item Run $\cM'$ on instance $D$ to get $f$.
\item Output a vector $\bz'$ where $z'_i = \max\left\{{j \in [D]} \mid f((i, (w - i)(D + 1) + j)) \leq 1/2\right\}$. (For simplicity, when such $j$ does not exist let $z'_i = 0$.)
\end{itemize}

By group privacy, $\cM$ is $(\ln(D/2), 0.25)$-DP.  Furthermore, $\cL(f^*; D) = 0$ and the expected empirical excess risk of $\cM'$ is
\begin{align*}
&\E_{f \sim \cM'(D)}[\cL(f; D)] \\
&\textstyle~\ge~ \frac{r}{n} \sum_{i \in [m]} \left(\ell(f(i, (w-i)(D+1) + z_i), 0) + \ell(f(i, (w-i)(D+1) + z_i + 1), 1)\right) \\
&\textstyle~\ge~ \frac{r}{n} \sum_{i \in [m]} g(1/2) \cdot \ind[z'_i \ne z_i] %
= \frac{R r}{n} \cdot \|\bz - \bz'\|_0,
\end{align*}
which must be at least $\Omega(R rm / n) = \Omega\paren{R \cdot \min\left\{1, \frac{w \cdot \log(h/w)}{\eps n}\right\}}$ by \Cref{lem:vec-lb-large-domain}.
\end{proof}

\section{Additional Related Work}\label{sec:related_work}

(Non-private) isotonic regression is well-studied in statistics and machine learning. The one-dimensional (aka univariate) case has long history~\cite{brunk1955maximum, vaneeden1958testing, barlow1973, van1990estimating, van1993hellinger, donoho1990gelfand, birge1993rates, meyer2000degrees, durot2007lp, durot2008monotone, yang2019contraction}; for a general introduction, see~\cite{groeneboom2014nonparametric}. Moreover, isotonic regression has been studied in higher dimensions \cite{han2019isotonic, kalai2009isotron, kakade2011efficient}, including the sparse setting \cite{gamarnik2019sparse}, as well as in online learning~\cite{kotlowski2016online}. A related line of work studies learning neural networks under (partial) monotonicity constraints \cite{archer1993application,you2017deep, liu2020certified, sivaraman2020counterexample}.

There has been a rich body of work on DP machine learning, including DP empirical risk minimization (ERM), e.g., \cite{chaudhuri2011differentially, bassily2014private, wang2017differentially, wang2019differentially}, and DP linear regression, e.g.,~\cite{AlabiMSSV22}; however, to the best of our knowledge none of these can be applied to isotonic regression to obtain non-trivial guarantees.

Another line of work related to our setting is around privately learning threshold functions~\cite{BeimelNS16,FeldmanX14,BunNSV15,AlonLMM19,kaplan_privately_2020}. We leveraged this relation to prove our lower bound for totally ordered case (\Cref{subsec:total-order-lower-bound}).

\section{Conclusions}\label{sec:discussion}

In this paper we obtained new private algorithms for isotonic regression on posets and proved nearly matching lower bounds in terms of the expected empirical excess risk. Although our algorithms for totally ordered sets are efficient, our algorithm for general posets is not. Specifically, a trivial implementation of the algorithm would run in time $\exp(\tilde{O}(\wid(\cX)))$. It remains an interesting open question whether this can be sped up. To the best of our knowledge, this question does not seem to be well understood even for the non-private setting, as previous algorithmic works have focused primarily on the totally ordered case. 
Similarly, while our algorithm is efficient for the totally ordered sets, it remains interesting to understand whether nearly linear time algorithms for $\ell_1$- and $\ell_2^2$-losses can be extended to a larger class of loss functions.

\newpage
\bibliographystyle{abbrv}
\bibliography{main.bbl}

\newpage

\appendix

\section{Baseline Algorithm for Private Isotonic Regression}\label{apx:baseline}

We provide a baseline algorithm for  private isotonic regression by a direct application of the exponential mechanism. For simplicity, we start with the case of totally ordered sets and then extend the algorithm to general posets.

\paragraph{Totally ordered sets.} Consider a discretized range of $\cT := \set{0, \frac{1}{T}, \frac{2}{T}, \ldots, 1}$. We have that for $\tilde{f} := \argmin_{f \in \cF([m], \cT)} \cL(f; D)$ and $f^* := \argmin_{f \in \cF([m], [0, 1])} \cL(f; D)$, it holds that $\cL(\tilde{f}; D) \le \cL(f^*; D) + \frac{1}{T}$. Also, it is a simple combinatorial fact that $|\cF([m], \cT)| = \binom{m+T}{T} \le (m+T)^T$, which bounds the number of monotone functions with this discretization.  Thus, the $\eps$-DP exponential mechanism over the set of all monotone functions in $\cF([m], \cT)$, with the score function $\cL(f; D)$ of sensitivity at most $L/n$, returns $f : [m] \to \cT$ such that
\[
\textstyle \cL(f; D) \le \cL(\tilde{f}; D) + O\paren{\frac{L T \log (m + T)}{\eps n}} ~\le~ \cL(f^*; D) + O\paren{\frac{L T \log (m + T)}{\eps n} + \frac{L}{T}}.
\]
Setting $T = \sqrt{\frac{\eps n}{\log m}}$, gives an excess empirical error of $O\paren{L \sqrt{\frac{\log (m)}{\eps n}}}$ (when $m \ge n$).

\paragraph{General posets.} By \Cref{lem:dilworth}, we have that $\cX$ can be partitioned into $w := \wid(\cX)$ many chains $H_1, \ldots, H_w$. Let $h_i := |H_i|$. Since any monotone function over $\cX$ has to be monotone over each of the chains, we have that
\[
\textstyle |\cF(\cX,\cT)| \le \prod\limits_{i=1}^w |\cF(H_i, \cT)| ~\le~ \paren{\frac{|\cX|}{w} + T}^{w T} ~\le~ (|\cX| + T)^{wT}.
\]
Thus, by a similar argument as above, the $\eps$-DP exponential mechanism over the set of all monotone functions in $\cF(\cX, \cT)$, with score function $\cL(f; D)$ returns $f : \cX \to \cT$ such that
\[
\textstyle \cL(f; D) ~\le~ \cL(f^*; D) + O\paren{L \cdot \frac{wT \log (|\cX| + T)}{n \eps} + \frac{L}{T}}.
\]
Choosing $T = \sqrt{\frac{\eps n}{w \log |\cX|}}$, gives an excess empirical error of $O\paren{L \sqrt{\frac{w \log |\cX|}{\eps n}}}$ (when $|\cX| \ge n$).

\section{Lower Bound on Privatizing Vectors with Large Alphabet: Proof of \Cref{lem:vec-lb-large-domain}}
\label{app:vec-lb-large-domain}

Below we prove~\Cref{lem:vec-lb-large-domain}. The proof below is a slight extension of that of~\Cref{lem:vec-dp-lb} in~\cite{Manurangsi22}.

\begin{proof}[Proof of \Cref{lem:vec-lb-large-domain}]
For every $i \in [m], \sigma \in [D]$, let $\bz_{(i, \sigma)}$ denote $(z_1, \dots, z_{i-1}, \sigma, z_{i+1}, \dots, z_m)$. Let $\eps' = \ln(D/2), \delta' = 0.25$. We have
\begin{align*}
&\E_{\bz \sim [D]^m}[\|\cM(\bz) - \bz\|_0] \\
&= \sum_{i \in [m]} \Pr_{\bz \sim [D]^m}[\cM(\bz)_i \ne z_i] \\
&= m - \sum_{i \in [m]} \Pr_{\bz \sim [D]^m}[\cM(\bz)_i = z_i] \\
&= m - \sum_{i \in [m]} \frac{1}{D^{m+1}} \sum_{\bz \in [D]^m} \left(\sum_{\sigma \in [D]} \Pr[\cM(\bz_{(i, \sigma)}) = \sigma] \right) \\
(\text{From } (\eps', \delta')\text{-DP of } \cM) &\geq m - \sum_{i \in [m]} \frac{1}{D^{m+1}} \sum_{\bz \in [D]^m} \left(\sum_{\sigma \in [D]} \left(e^{\eps'} \cdot \Pr[\cM(\bz_{(i, 1)}) = \sigma] + \delta'\right) \right) \\
&\geq m - \sum_{i \in [m]} \frac{1}{D^{m+1}} \sum_{\bz \in [D]^m} \left(e^{\eps'} + D\delta'\right) \\
&= \left(1 - e^{\eps'} / D - \delta'\right)m \\
&= 0.25m. \qedhere
\end{align*}
\end{proof}

\section{Algorithms for Isotonic Regression with Additional Constraints}\label{apx:total-order-extensions}
\newcommand{\newstuff}[1]{\textcolor{Gred}{#1}}
\newcommand{\Slft}{S^{\mathrm{left}}}
\newcommand{\Srgt}{S^{\mathrm{right}}}

In this section, we elaborate on the constrained variants of the isotonic regression problem over totally ordered sets, by designing a meta-algorithm that can be instantiated to get algorithms for each of the cases discussed in \Cref{subsec:total-order-extensions}.

Recall that \Cref{alg:total-order-posets} proceeded in $T$ rounds where in round $t$ the algorithm starts with a partition of $[m]$ into $2^t$ intervals, and then partitions each interval into two using the exponential mechanism. At a high-level, our meta-algorithm is similar, except that, it maintains a set of pairwise disjoint {\em structured intervals} of $[m]$, that is, each interval has an additional structure which imposes constraints on the function that can be returned on the said interval; moreover, the function is fixed outside the union of the said intervals.  This idea is described in \Cref{alg:total-order-posets-variants}, stated using the following abstractions, which will be instantiated to derive algorithms for each constrained variant.
\begin{itemize}[leftmargin=4mm, nosep]
\item A set of all {\em structured intervals} of $[m]$ denoted as $\cS$, and an {\em initial structured interval} $S_{0,0} \in \cS$. A structured interval $S$ will consist of an interval domain denoted $P_S \subseteq [m]$, an interval range denoted $R_S \subseteq [0,1]$, and potentially additional other constraints that the function should satisfy. We use $|R_S|$ to denote the length of $R_S$. In order to make the number of structured intervals bounded, we will consider a discretized range where the endpoints of interval $R_S$ lie in $\cH := \{0, 1/H, 2/H, \ldots, 1\}$ for some discretization parameter $H$.  
\item A {\em partition method} $\Phi : S \mapsto \{(\Slft, \Srgt, g)\}$ that defines a set of all ``valid partitions'' of a structured interval $S$ into two structured intervals $\Slft$ and $\Srgt$ and a function $g : P_S \smallsetminus (P_{\Slft} \cup P_{\Srgt}) \to R_S$. It is required that $P_S \smallsetminus (P_{\Slft} \cup P_{\Srgt})$ be an interval. If the algorithm makes a choice of $(\Slft, \Srgt, g)$, then the final function returned by the algorithm is required to be equal to $g$ on $P_S \smallsetminus (P_{\Slft} \cup P_{\Srgt})$.
\item For all $S \in \cS$, we abuse notation to let $\cF(S)$ denote the set of all monotone functions mapping $P_S$ to $R_S$, while respecting the additional conditions enforced by the structure in $S$.
\end{itemize}

\begin{algorithm}[t]
\caption{Meta algorithm for variants of DP Isotonic Regression for Totally Ordered Sets.}
\label{alg:total-order-posets-variants}
\begin{algorithmic}
\STATE {\bf Input:} $\cX = [m]$, dataset $D = \{(x_1, y_1), \ldots, (x_n, y_n)\}$, DP parameter $\eps$.
\STATE {\bf Output:} Monotone function $f : [m] \to [0,1]$ satisfying additional desired condition.
\STATE
\STATE $T \gets \ceil{\log(\eps n)}$
\STATE $\eps' \gets \eps / T$
\STATE $S_{0,0}$ : initial structured interval
\hspace{28mm} \textcolor{black!50}{\{Any structured interval $S$ consists of an interval domain $P_S$ and an interval range $R_S$, and potentially other conditions on the function.\}}
\FOR{$t = 0, \ldots, T-1$}
\FOR{$i = 0, \ldots, 2^t - 1$}
\STATE \mbox{}\\\vspace*{-5mm}
\begin{itemize}[leftmargin=3mm,topsep=7pt,itemsep=4pt,label={$\triangleright$\hspace{-7mm}}]
	\item $D_{i,t} \gets \set{(x_j, y_j) \mid j \in [n], x_j \in P_{S_{i, t}}}$ 
	\hfill
	\item Choose $(\Slft_{i,t}, \Srgt_{i,t}, g_{i,t}) \in \Phi(S_{i,t})$, using $\eps'$-DP exponential mechanism with scoring function 
	\begin{align*}
		\score_{i, t}(\Slft, \Srgt, g) &~:=~
		\min_{f_1 \in \cF(\Slft)} \cLa_{R_{S}}(f_1; D^{\mathrm{left}}_{i, t}) \\
		&\qquad +~ \min_{f_2 \in \cF(\Srgt)} \cLa_{R_{S}}(f_2; D^{\mathrm{right}}_{i, t})\\
		&\qquad +~ \cLa_{R_S}(g; D^{\mathrm{mid}}_{i, t})
	\end{align*}
	\textcolor{black!50}{\{Notation: $D_{i,t}^{\mathrm{left}} := \set{(x, y) \in D_{i,t} \mid x \in P_{\Slft}}$, $D_{i,t}^{\mathrm{right}}$ is defined similarly and\\
	\phantom{\text{Notation: }} $D_{i,t}^{\mathrm{mid}} := \set{(x, y) \in D_{i,t} \mid x \in P_{S_{i,t}} \smallsetminus (P_{\Slft} \cup P_{\Srgt})}$.\}}\\
	\textcolor{black!50}{ \{Note: $\score_{i,t}(\Slft, \Srgt, g)$ has sensitivity at most $L \cdot |R_S|$.\}}
	\item $S_{2i,t+1} \gets \Slft_{i, t}$ and $S_{2i+1,t+1} \gets \Srgt_{i, t}$.
\end{itemize}
\ENDFOR
\ENDFOR
\STATE Let $f : [m] \to [0,1]$ be choosing $f|_{P_{S_{i,T-1}}} \in \cF(S_{i,T-1})$ arbitrarily for all $i \in [2^T]$, and $f(x) = g_{i,t}(x)$ for all $x \in P_{S_{i,t}} \smallsetminus (P_{S_{2i,t}} \cup P_{S_{2i+1,t}})$ for all $i$, $t$.
\RETURN $f$
\end{algorithmic}
\end{algorithm}

We instantiate this notion of structured intervals in the following ways to derive algorithms for the constrained variants of isotonic regression mentioned earlier:
\begin{itemize}[leftmargin=4mm, nosep]
\item {\em (Vanilla) Isotonic Regression (recovers \Cref{alg:total-order-posets})}: $\cS$ is simply the set of all interval domains, and all (discretized) interval ranges and the partition method simply partitions into two sub-intervals, with the range divided into two equal parts.\footnote{We ignore a slight detail that $\frac{\tau + \theta}{2}$ need not be in $\cH$; this can be fixed e.g., by letting it be $\floor{H \cdot \frac{\tau + \theta}{2}}/ H$, but we skip this complicated expression for simplicity. Note that, if we let $H = 2^T$, this distinction does not make a difference in the algorithm for vanilla isotonic regression.} Namely,
\begin{align*}
    \cS &~:=~  \set{([i, j], [\tau, \theta]) : i, j \in [m]\,, \tau, \theta \in \cH \text{ s.t. } i \le j\,, \tau \le \theta}\,,\\
    S_{0,0} &~:=~ ([1, m], [0, 1])\,,\\
    \Phi(([i, j], [\tau, \theta])) &\textstyle~:=~ \set{(([i, \ell], [\tau, \frac{\tau + \theta}{2}]), ([\ell+1, j], [\frac{\tau+\theta}{2}, \theta])) : i - 1 \le \ell \le j}\,,\\
    \cF(([i,j], [\tau, \theta])) &~:=~ \text{set of monotone functions mapping } [i, j] \text{ to } [\tau, \theta]\,.
\end{align*}

We skip the description of the function $g$ in the partition method $\Phi$, since the middle sub-interval is empty. For all the other variants, we skip having to explicitly write the conditions of $i, j \in [m]$, $\tau, \theta \in \cH$, $i \le j$, and $\tau \le \theta$ in definition of $\cS$, and similarly that $\cF(S)$ consist of monotone functions mapping $[i,j]$ to $[\tau,\theta]$; we only focus on the main new conditions.

\item {\em $k$-Piecewise Constant}: $\cS$ is the set of all interval domains, all discretized ranges, along with a parameter (encoding an upper bound on the number of pieces in the final piecewise constant function). The partition method partitions into two sub-intervals respecting that the number of pieces and the range divided into two equal parts, namely,
\begin{align*}
    \cS &~:=~  \set{([i, j], [\tau, \theta], r) : \begin{matrix}1 \le r \le k &\text{ if } i \leq  j,\\ r = 0 &\text{ if } i > j\end{matrix}}\,,\\
    S_{0,0} &~:=~ ([1, m], [0, 1], k)\,,\\
    \Phi(([i, j], [\tau, \theta], r)) &\textstyle~:=~ \set{\begin{matrix}(([i, \ell], [\tau, \frac{\tau + \theta}{2}], r_1), ([\ell+1, j], [\frac{\tau+\theta}{2}, \theta], r_2)) \\[2mm] \text{s.t. } i - 1 \le \ell \le j\ \text{ and } r_1 + r_2 = r\end{matrix}}\,,\\
    \cF(([i,j], [\tau, \theta], r)) &~:=~ \text{set of $r$-piecewise constant functions}
\end{align*}

\item {\em $k$-Piecewise Linear}: $\cS$ is the set of all interval domains, all discretized ranges, along with a parameter (encoding an upper bound on the number of pieces in the final piecewise linear function), and two Boolean values ($\top$/$\bot$), one encoding whether the function must achieve the minimum possible value at the start of the interval, and other encoding whether it must achieve the maximum possible value at the end of the interval.
The partition method partitions into two sub-intervals respecting that the number of pieces, by choosing a middle sub-interval that ensures that each range is at most half as large as the earlier one, namely,
\begin{align*}
    \cS &~:=~  \set{([i, j], [\tau, \theta], r, b_1, b_2) : \begin{matrix}\begin{matrix}1 \le r \le k &\text{ if } i \leq j,\\ r = 0 &\text{ if } i > j,\end{matrix}\\b_1, b_2 \in \set{\top, \bot}\end{matrix}}\,,\\
    S_{0,0} &~:=~ ([1, m], [0, 1], k, \bot, \bot)\,,\\
    \Phi(([i, j], [\tau, \theta], r, b_1, b_2)) &\textstyle~:=~ \set{\begin{matrix}%
        \left(\begin{matrix}
            \Slft = ([i, \ell_1], [\tau, \omega_1], r_1, b_1, \top),\\
            \Srgt = ([\ell_2, j], [\omega_2, \theta], r_2, \top, b_2)\\
            g(x) = \omega_1 + (x - \ell_1) \cdot (\omega_2 - \omega_1) / (\ell_2 - \ell_1)
            \end{matrix}\right)\\[2mm]
        \text{s.t. } i - 1 \le \ell_1 < \ell_2 \le j + 1\ , \omega_1 \le \frac{\tau + \theta}{2} \le \omega_2\,,\\
        \phantom{\text{s.t. }} \text{ and } r_1 + r_2 = r - 1\end{matrix}}\,,\\
    \cF(([i,j], [\tau, \theta], r, b_1, b_2)) &~:=~ \text{set of $r$-piecewise linear functions $f$}\\
    &~~\phantom{:=}~~\text{s.t. $f(i) = \tau$ if $b_1=\top$ and $f(j) = \theta$ if $b_2=\top$.}
\end{align*}
In other words, $\Phi(([i, j], [\tau, \theta], r, b_1, b_2))$ considers the three sub-intervals $[i, \ell_1]$, $[\ell_1, \ell_2]$ and $[\ell_2, j]$, and fits an affine function $g$ in the middle sub-interval $[\ell_1, \ell_2]$ such that $g(\ell_1) = \omega_1$ and $g(\ell_2) = \omega_2$ and ensures that the function $f$ returned on sub-intervals $[i, \ell_1]$ and $[\ell_2, j]$ satisfies $f(\ell_1) = \omega_1$ and $f(\ell_2) = \omega_2$.

\item {\em Lipschitz Regression}: Given any Lipschitz constant $L_f$, $\cS$ is the set of all interval domains, all discretized ranges, along with two Boolean values ($\top$/$\bot$), one encoding whether the function must achieve the minimum possible value at the start of the interval, and other encoding whether it must achieve the maximum possible value at the end of the interval.
The partition method chooses sub-intervals by choosing $\ell$ and function values $f(\ell)$ and $f(\ell+1)$ such that $f(\ell+1) - f(\ell) \le L_f$ (thereby respecting the Lipschitz condition), and moreover $f(\ell) \le \frac{\tau+\theta}{2}$ and $f(\ell+1) \ge \frac{\tau+\theta}{2}$.
\begin{align*}
    \cS &~:=~  \set{([i, j], [\tau, \theta], b_1, b_2) : b_1, b_2 \in \set{\top, \bot}}\,,\\
    S_{0,0} &~:=~ ([1, m], [0, 1], \bot, \bot)\,,\\
    \Phi(([i, j], [\tau, \theta], b_1, b_2)) &\textstyle~:=~ \set{\begin{matrix}%
        \left(\begin{matrix}
            \Slft = ([i, \ell], [\tau, \omega_1], b_1, \top),\\
            \Srgt = ([\ell+1, j], [\omega_2, \theta], \top, b_2)
         \end{matrix}\right)\\[2mm]
        \text{s.t. } i - 1 \le \ell \le j\ , \omega_1 \le \frac{\tau + \theta}{2} \le \omega_2\,,\\
        \phantom{\text{s.t. }} \omega_2 - \omega_1 \le L_f\end{matrix}}\,,\\
    \cF(([i,j], [\tau, \theta], b_1, b_2)) &~:=~ \text{set of $L_f$-Lipschitz linear functions $f$}\\
    &~~\phantom{:=}~~\text{s.t. $f(i) = \tau$ if $b_1=\top$ and $f(j) = \theta$ if $b_2=\top$.}
\end{align*}

\item {\em Convex/Concave}: We only describe the convex case; the concave case follows similarly. Note that a function $f$ is convex over the discrete domain $[m]$ if and only if $f(x+1) + f(x-1) > 2 \cdot f(x)$ holds for all $x$. Let $\cS$ be the set of all interval domains, all discretized ranges, along with the following additional parameters
\begin{itemize}[leftmargin=4mm]
    \item a lower bound $L$ on the (discrete) derivative of $f$,
    \item an upper bound $U$ on the (discrete) derivative of $f$,
    \item a Boolean value encoding whether the function must achieve the minimum possible value at the start of the interval,
    \item another Boolean value encoding whether the function must achieve the maximum possible value at the end of the interval.
\end{itemize}
The partition method chooses sub-intervals by choosing $\ell$ and function values $f(\ell)$ and $f(\ell+1)$ such that $L \le f(\ell+1) - f(\ell) \le U$, $f(\ell) \le \frac{\tau+\theta}{2}$ and $f(\ell+1) \ge \frac{\tau+\theta}{2}$ and enforcing that the left sub-interval has derivatives at most $f(\ell+1) - f(\ell)$ and the right sub-interval has derivatives at least $f(\ell+1) - f(\ell)$.
\begin{align*}
    \cS &~:=~  \set{([i, j], [\tau, \theta],L, U, b_1, b_2) : L \le U, b_1, b_2 \in \set{\top, \bot}}\,,\\
    S_{0,0} &~:=~ ([1, m], [0, 1], -\infty, +\infty, \bot, \bot)\,,\\
    \Phi(([i, j], [\tau, \theta], L, U, b_1, b_2)) &\textstyle~:=~ \set{\begin{matrix}%
        \left(\begin{matrix}
            \Slft = ([i, \ell], [\tau, \omega_1], L, \omega_2 - \omega_1, b_1, \top),\\
            \Srgt = ([\ell+1, j], [\omega_2, \theta], \omega_2 - \omega_1, U, \top, b_2)
         \end{matrix}\right)\\[2mm]
        \text{s.t. } i - 1 \le \ell \le j\ , \omega_1 \le \frac{\tau + \theta}{2} \le \omega_2\,,\\
        \phantom{\text{s.t. }} L \le \omega_2 - \omega_1 \le U\end{matrix}}\,,\\
    \cF(([i,j], [\tau, \theta], L, U)) &~:=~ \text{set of convex functions $f$}\\
    &~~\phantom{:=}~~\text{s.t. for all $\ell \in [i, j)$ it holds that $L \le f(\ell+1) - f(\ell) \le U$,}\\
    &~~\phantom{:=}~~\text{and $f(i) = \tau$ if $b_1=\top$ and $f(j) = \theta$ if $b_2=\top$.}
\end{align*}

\end{itemize}

\paragraph{Privacy Analysis.} Follows similarly as done for \Cref{alg:total-order-posets}.

\paragraph{Utility Analysis.} Since $|R_{S_{i,t}}| \le 2^{-t}$ in each of the cases, it follows that the sensitivity of the scoring function is at most $L/2^t$. The rest of the proof follows similarly, with the only change being that the number of candidates in the exponential mechanism is given as $|\Phi(S_{i,t})|$, which in the case of vanilla isotonic regression was simply $|P_{i,t}|$. We now bound this for each of the cases, which shows that $\log |\Phi(S_{i,t})|$ is at most $O(\log (mn))$. In particular,
\begin{itemize}[nosep,leftmargin=4mm]
\item {\em $k$-Piecewise Constant:} $|\Phi(S)| \le O(mk)$.
\item {\em $k$-Piecewise Linear:} $|\Phi(S)| \le O(m^2 H^2 k)$.
\item {\em $L_f$-Lipschitz:} $|\Phi(S)| \le O(m H^2)$.
\item {\em Convex/Concave:} $|\Phi(S)| \le O(m H)$
\end{itemize}
Finally, there is an additional error due to discretization. %
To account for the discretization error, we argue below for appropriately selected values of $H$ that, for any optimal function $f^*$, there exists $f \in \cF(S_{0, 0})$ such that $|f^*(x) - f(x)| \leq 1/n$. This indeed immediately implies that the discretization error is at most $O(1)$.
\begin{itemize}[nosep,leftmargin=4mm]
\item {\em $k$-Piecewise Linear:} We may select $H = n$. In this case, for every endpoint $\ell$, we let $f(\ell) = H \cdot \lceil f^*(\ell) / H \rceil$ and interpolate the intermediate points accordingly. It is simple to see that $f^*(x) - f(x) \leq 1/n$ as desired.
\item {\em $L_f$-Lipschitz} and {\em Convex/Concave:} Let $H = mn$. Here we discretize the (discrete) derivative of $f$. Specifically, let $f(1) = \lfloor H \cdot f^*(1)\lfloor / H$ and let $f(\ell + 1) - f(\ell) = \lfloor H \cdot (f^*(\ell + 1) - f^*(\ell))\rfloor / H$ for all $\ell = 2, \dots, m$. Once again, it is simple to see that $f, f^*$ differ by at most $1/n$ at each point.
\end{itemize}

In summary, in all cases, we have $|\Phi(S)| \leq (nm)^{O(1)}$ resulting in the same asymptotic error as in the unconstrained case.

\paragraph{Runtime Analysis.} 
It is easy to see that each score value can be computed (via dynamic programming) in time $\mathrm{poly}(n) \cdot \mathrm{poly}(H)$. Thus, the entire algorithm can be implemented in time that $\mathrm{poly}(n) \cdot \mathrm{poly}(H) \cdot \log m \leq (nm)^{O(1)}$ as claimed.\footnote{In the main body, we erroneously claimed that the running time was $(n \log m)^{O(1)}$, instead of $(nm)^{O(1)}$.}

\section{Missing Proofs from \Cref{sec:general-posets}}

For a set $S \subseteq \cX$, its lower closure and upper closure are defined as $S^{\leq} := \{x \in \cX \mid \exists s \in S, x \leq s\}$ and $S^{\geq} := \{x \in \cX \mid \exists s \in S, x \geq s\}$, respectively. Similarly, the strict lower closure and strict upper closure are defined as $S^{<} := \{x \in \cX \mid \exists s \in S, x < s\}$ and $S^{>} := \{x \in \cX \mid \exists s \in S, x > s\}$. When $S = \emptyset$, we use the convention that $S^{\leq} = S^{<} = \emptyset$ and $S^{\geq} = S^{>} = \cX$.

\subsection{Proof of \Cref{thm:alg-generic}}

We note that, in the proof below, we also consider the empty set to be an anti-chain.

\begin{proof}[Proof of \Cref{thm:alg-generic}]
We use the notations of $\ell_{[\tau, \theta]}$ and $\cLa_{[\tau, \theta]}$ as defined in the proof of \Cref{thm:alg-total-order}.

Any monotone function $f : \cX \to [0, 1]$ corresponds to an antichain $A$ in $\cX$ such that $f(a) \ge 1/2$ for all $a \in A^{>}$ and $f(a) \le 1/2$ for all $a \in A^{\le}$. Our algorithm works by first choosing this antichain $A$ in a DP manner using the exponential mechanism. The choice of $A$ partitions the poset into two parts $A^{>}$ and $A^{\le}$ and the algorithm recurses on these two parts to find functions $f_{>} : A^{>} \to [1/2, 1]$ and $f_{\le} : A^{\le} \to [0, 1/2]$, which are put together to obtain the final function.

In particular, the algorithm proceeds in $T$ stages, where in stage $t$, the algorithm starts with a partition of $\cX$ into $2^t$ parts $\set{P_{i,t} \mid i \in [2^{t}]}$, and the algorithm eventually outputs a monotone function $f$ such that $f(x) \in [i/2^t, (i+1)/2^t]$ for all $x \in P_{i,t}$. This partition is further refined for stage $t+1$ by choosing an antichain $A_{i,t}$ in $P_{i,t}$ and partitioning $P_{i,t}$ into $P_{i,t} \cap A_{i,t}^{>}$ and $P_{i,t} \cap A_{i,t}^{\le}$. In the final stage, the function $f$ is chosen to be the constant $i/2^{T-1}$ over $P_{i, T-1}$.  A complete description is presented in \Cref{alg:general-posets}.

\begin{algorithm}[t]
\caption{DP Isotonic Regression for General Posets}%
\label{alg:general-posets}
\begin{algorithmic}
\STATE {\bf Input:} Poset $\cX$, dataset $D = \{(x_1, y_1), \ldots, (x_n, y_n)\}$, DP parameter $\eps$.
\STATE {\bf Output:} Monotone function $f : \cX \to [0,1]$.
\STATE
\STATE $T \gets \ceil{\log(\eps n)}$ and $\eps' \gets \eps / T$.
\STATE $P_{0,0} \gets \cX$
\FOR{$t = 0, \ldots, T-1$}
	\FOR{$i = 0, \ldots, 2^t - 1$}
		\STATE \mbox{}\\\vspace*{-4mm}
		\begin{itemize}[leftmargin=3mm,topsep=7pt,itemsep=4pt,label={$\triangleright$\hspace{-8mm}}]
			\item $D_{i,t} \gets \set{(x_j, y_j) \mid j \in [n], x_j \in P_{i, t}}$ (set of all input points whose $x$ belongs to $P_{i, t}$)
			\item $\cA_{i, t} \gets$ set of all antichains in $P_{i, t}$.\\[1mm]
			For each antichain $A \in \cA_{i, t}$, we abuse notation to use\vspace{-2mm}
			\begin{itemize}[leftmargin=3mm,itemsep=4pt,label={$\bullet$\hspace{-8mm}}]
				\item $D_{i, t} \cap A^{\leq}$ to denote$\{(x, y) \in D_{i, t} \mid x \in A^{\leq}\}$, and
				\item $D_{i, t} \cap A^{>}$ to denote $\{(x, y) \in D_{i, t} \mid x \in A^{>}\}$.
			\end{itemize}
			\item Choose antichain $A_{i,t} \in \cA_{i,t}$ using the exponential mechanism with the scoring function 
			\begin{align*}
				\score_{i, t}(A) =
				&\min_{f_1 \in \cF(P_{i, t} \cap A^{\leq}, [\frac{i}{2^{t}}, \frac{i+0.5}{2^t}])} \cLa_{[\frac{i}{2^{t}}, \frac{i+1}{2^{t}}]}(f_1; D_{i, t} \cap A^{\leq}) \\
				&\qquad +~ \min_{f_2 \in \cF(P_{i, t} \cap A^{>}, [\frac{i+0.5}{2^t}, \frac{i+1}{2^t}])} \cLa_{[\frac{i}{2^{t}}, \frac{i+1}{2^{t}}]}(f_2; D_{i, t} \cap A^{>}),
			\end{align*}
			\textcolor{black!50}{\{$\score_{i,t}(A)$ has sensitivity at most $L/2^{t}$.\}}
			\item $P_{2i,t+1} \gets P_{i, t} \cap A_{i,t}^{\leq}$ and $P_{2i+1,t+1} \gets P_{i, t} \cap A_{i,t}^{>}$.
		\end{itemize}
	\ENDFOR
\ENDFOR
\STATE Let $f : \cX\to [0,1]$ be given by $f(x) = i/2^{T-1}$ for all $x \in P_{i, T-1}$ and all $i \in [2^t]$.
\RETURN $f$
\end{algorithmic}
\end{algorithm}

\noindent Before proceeding to prove the algorithm's privacy and utility guarantees, we note that the output $f$ is indeed monotone because for every $x' < x$ that gets separated when we partition $P_{i,t}$ to $P_{2i,t+1}, P_{2i+1,t+1}$, we must have $x' \in P_{2i,t+1}$ and $x \in P_{2i+1,t+1}$.

\paragraph{Privacy Analysis.} Similar to the proof of \Cref{thm:alg-total-order}, it follows that each inner subroutine for each $t$ is $\eps'$-DP, and thus the entire mechanism is $\eps$-DP by basic composition of DP (\Cref{lem:dp_properties}).

\paragraph{Utility Analysis.}
Since the sensitivity of $\score_{i,t}(\cdot)$ is at most $L/2^t$, we have from \Cref{lem:exp-mechanism}, that for all $t \in \{0, \dots, T - 1\}$ and $i \in [2^{t}]$,
\begin{align} \label{eq:EM-util}
\E\left[\score_{i, t}(A_{i,t}) - \min_{A \in \cA_{i,t}} \score_{i, t}(A)\right] \leq O\paren{\frac{L \cdot \log |\cA_{i, t}|}{\eps' \cdot 2^{t}}}
&\leq O\paren{\frac{L \cdot \wid(\cX) \cdot \log |\cX|}{\eps' \cdot 2^t}}.
\end{align}
To facilitate the subsequent steps of the proof, let us introduce additional notation.
Let $h_{i, t}$ denote $\argmin_{h \in \cF(P_{i, t}, [i/2^{t}, (i+1)/2^{t}])} \cLa(h; D_{i, t})$ (with ties broken arbitrarily). Then, let $\tA_{i,t}$ denote the set of all maximal elements of $h_{i, t}^{-1}([i/2^t, (i + 1/2) / 2^t])$. Under this notation, we have that %
\begin{align}
&\score_{i, t}(A_{i,t}) - \min_{A \in \cA_{i,t}} \score_{i, t}(A) \nonumber \\
&~\ge~ \score_{i, t}(A_{i,t}) - \score_{i, t}(\tA_{i, t}) \\ \nonumber
&~=~ \paren{\cLa_{[i/2^{t}, (i+1)/2^{t}]}(h_{2i, t+1}; D_{2i, t+1}) + \cLa_{[i/2^{t}, (i+1)/2^{t}]}(h_{2i+1, t+1}; D_{2i+1, t+1})} \\ \nonumber
&\qquad - \cLa_{[i/2^{t}, (i+1)/2^{t}]}(h_{i, t}; D_{i, t}) \\ 
&~=~ \cLa(h_{2i, t+1}; D_{2i, t+1}) + \cLa(h_{2i+1, t+1}; D_{2i+1, t+1}) - \cLa(h_{i, t}; D_{i, t}). \label{eq:one-step-err}
\end{align}

Finally, notice that
\begin{align} 
\cLa(f; D_{i, T-1}) - \cLa(h_{i, T - 1}; D_{i, T - 1}) \leq \frac{L}{2^{T-1}} \cdot |D_{i,T-1}| = O\paren{\frac{|D_{i,T-1}|}{\eps n}}. \label{eq:rounding-err-util}
\end{align}

With all the ingredients ready, we may now bound the expected (unnormalized) excess risk. We have that
\begin{align*}
\cLa(f; D) &= \sum_{i \in [2^{T-1}]} \cLa(f; D_{i, T-1}) \\
&\overset{\eqref{eq:rounding-err-util}}{\leq}  \sum_{i \in [2^{T-1}]} \paren{O\paren{\frac{|D_{i,T-1}|}{\eps n}} + \cLa(h_{i, T - 1}; D_{i, T-1})} \\
&= O(1/\eps) ~+~ \sum_{i \in [2^{T-1}]} \cLa(h_{i, T - 1}; D_{i, T-1}) \\
&= O(1/\eps) ~+~ \cLa(h_{0, 0}; D_{0, 0}) \\ &\qquad + \sum_{t \in [T-1]} \sum_{i \in [2^{t-1}]} \paren{\cLa(h_{2i, t}; D_{2i, t}) + \cLa(h_{2i+1, t}; D_{2i+1, t}) - \cLa(h_{i, t-1}; D_{i, t-1}) }.
\end{align*}
Taking the expectation on both sides and using~\eqref{eq:EM-util} and~\eqref{eq:one-step-err} yields
\begin{align*}
\E[\cLa(f; D)] 
&\leq O(1/\eps) ~+~ \cLa(h_{0, 0}; D_{0, 0}) ~+~ \sum_{t \in [T-1]} \sum_{i \in [2^{t-1}]} O\paren{\frac{L \cdot \wid(\cX) \cdot \log |\cX|}{\eps' \cdot 2^t}} \\
&= O(1/\eps) ~+~ \cLa(f^*; D) ~+~ \sum_{t \in [T-1]} O\paren{\frac{L \cdot \wid(\cX) \cdot \log |\cX|}{\eps'}} \\
&= O(1/\eps) ~+~ \cLa(f^*; D) ~+~ O\paren{T \cdot \frac{L \cdot \wid(\cX) \cdot \log |\cX|}{\eps'}} \\
&= O(1/\eps) ~+~ \cLa(f^*; D) ~+~ O\paren{T^2 \cdot \frac{L \cdot \wid(\cX) \cdot \log |\cX|}{\eps}} \\
&= \cLa(f^*; D) ~+~ O\paren{\frac{L \cdot \wid(\cX) \cdot \log |\cX| \cdot (1 + \log^2(\eps n))}{\eps}}.
\end{align*}
Dividing both sides by $n$ yields the desired claim.
\end{proof}

\end{document}